\documentclass{article}

\usepackage[margin=1in]{geometry}
\usepackage{times}

\usepackage[utf8]{inputenc}
\usepackage[T1]{fontenc}
\usepackage{microtype}

\usepackage{amsmath,amssymb,amsthm,amsfonts}
\newtheorem{theorem}{Theorem}

\usepackage{booktabs}
\usepackage{graphicx}
\graphicspath{{./}{../}{figures/}}
\usepackage{subcaption}
\usepackage{multirow}
\usepackage{algorithm}
\usepackage{algorithmic}

\usepackage[numbers,sort&compress]{natbib}
\usepackage{xcolor}
\usepackage{hyperref}
\hypersetup{
  colorlinks=true,
  linkcolor=blue!60!black,
  citecolor=blue!60!black,
  urlcolor=blue!60!black,
}

\title{\textbf{Cortex-Inspired Continual Learning:\\
Unsupervised Instantiation and Recovery of Functional Task Networks}}

\author{%
  Kevin McKee \quad
  Thomas Hazy \quad
  Yicong Zheng \quad
  Zacharie Bugaud \quad
  Thomas Miconi \quad \\
  Astera Institute
}

\date{}

\begin{document}
\maketitle

\begin{abstract}
Block-sequential continual learning demands that a single model both protect prior solutions from catastrophic forgetting and efficiently infer at inference time which prior solution matches the current input without task labels. We present \emph{Functional Task Networks} (FTN), a parameter-isolation method inspired by structural and dynamical motifs found in the mammalian neocortex. Similar to mixture-of-experts, this method uses a high dimensional, self-organizing binary mask over a large population of small but deep networks, inspired by dendritic models of pyramidal neurons. The mask is produced by a three-stage procedure: (1) gradient descent on a continuous mask identifies task-relevant neurons, (2) a smoothing kernel biases the result toward spatial contiguity, (3) and $k$-winner-take-all binarizes the resulting group at a fixed capacity budget. Like mixture-of-experts, each neuron is an independent deep network, so disjoint masks give exactly disjoint gradient updates, providing structural guarantees against catastrophic forgetting. This three-stage procedure recovers the subnetwork of a previously-trained task in a single gradient step, providing unsupervised task segmentation at inference time. We test it on three continual-learning benchmarks: (1) a synthetic multi-task classification/regression generator, (2) MNIST with shuffled class labels (pure concept shift), and (3) Permuted MNIST (domain shift). On all three, FTN with several, fine grained smoothing steps (FTN-Slow) results in nearly zero forgetting. FTN with a larger kernel and only 2 iterations of smoothing (FTN-Fast) trades off a small amount of retention for increased speed. We show that the spatial organization mechanism reduces the effective mask search from the combinatorial top-$k$ subset problem in $\mathcal{O}(\binom{H}{k})$ to the complexity of a near-linear scan in $\mathcal{O}(H)$ over compact cortical neighborhoods, which is parallelized by the gradient-based update.
\end{abstract}

\section{Introduction}
\label{sec:intro}

Modern learning systems, from lifelong agents to continually pretrained language models~\citep{gururangan2020dontstop,yildiz2024investigating}, must absorb new data without degrading or destroying old competence. When a network trained on task $A$ is subsequently fine-tuned on task $B$, gradient updates on $B$ overwrite the parameter configurations that encoded $A$'s solution~\citep{mccloskey1989catastrophic,french1999catastrophic}, producing \emph{catastrophic forgetting}. The problem is most acute under two complementary conditions: \emph{block-sequential} training, where each task is a contiguous data block with no interleaving (the standard regime for continual pretraining of LLMs); and \emph{concept shift}, where the same input features map to different correct outputs across tasks~\citep{gama2014concept,morenotorres2012unifying}. In both settings, the learner must also solve an \emph{unsupervised task detection} problem: given a few examples of input-output pairs at inference time, identify which prior solution (if any) applies~\citep{aljundi2019taskfree}.

Within continual learning, there are two kinds of data non-stationarity that result in forgetting. \emph{Domain shift} (or covariate shift) leaves the input-output relationship intact but changes the marginal input distribution~\citep{quinonero2009dataset,morenotorres2012unifying}, which can result in partial overwriting of solutions if the whole input domain is not represented in later updates. Classic benchmarks such as Permuted MNIST~\citep{goodfellow2013empirical,kirkpatrick2017overcoming,van2019three} are usually treated as domain-incremental benchmarks, although the raw-coordinate conditional changes under each fixed permutation. \emph{Concept shift} changes the input-output relationship itself. Identical features yield different correct labels~\citep{gama2014concept,morenotorres2012unifying}. Concept shift is the harder setting for regularization methods because it can demand that a shared model represent incompatible input-output maps, especially when task identity is not provided at test time~\citep{van2019three}. Block-sequential training amplifies the negative effects on retention from both kinds of shift. A further desideratum is \emph{task-free} continual learning~\citep{aljundi2019taskfree,parisi2019continual,van2019three}: the learner must figure out task boundaries from the data stream itself rather than being told.

Three families of methods have emerged to address forgetting~\citep{parisi2019continual}. \emph{Regularization} approaches~\citep{kirkpatrick2017overcoming,zenke2017continual} penalize updates to parameters estimated to be important for prior tasks; they typically fail under concept shift because a single parameter cannot simultaneously satisfy conflicting weight configurations. \emph{Experience replay}~\citep{rebuffi2017icarl,lopez2017gradient} stores and rehearses past examples to anchor shared representations, but the memory and compute footprint scales with task history. \emph{Parameter isolation}~\citep{mallya2018packnet,serra2018overcoming,wortsman2020supermasks,rusu2016progressive,yoon2018lifelong} assigns each task a disjoint subnetwork, converting the forgetting problem into a routing problem: if different tasks recruit different parameters, training one cannot disturb the other.

This paper contributes a parameter-isolation method, \emph{Functional Task Networks} (FTN), that unifies the three core requirements of continual learning in a single mechanism:
\begin{itemize}
  \item \textbf{Structural forgetting protection.} A parallel-neuron backbone stores each neuron's weights as a private tensor; a binary routing mask gates the neurons' scalar outputs. Disjoint masks therefore produce exactly disjoint gradient paths.
  \item \textbf{Rapid unsupervised task detection.} A three-stage mask \emph{configurer}, i.e., gradient descent on a continuous mask, lateral smoothing, and $k$-winner-take-all (KWTA) binarization, is run cold-started on a batch of input-output pairs with no task label. The same procedure that \emph{installs} a subnetwork during training \emph{recovers} it at inference time, in only one gradient step.
  \item \textbf{Efficient shared-neuron consolidation.} Where tasks do share neurons, the mask overlap between the current task's mask and buffered task masks provides a natural query for selective experience replay, minimizing the amount of replay needed.
\end{itemize}

\subsection{Inspirations from biology}
Our method takes inspiration from several structural and dynamical motifs observed in the mammalian neocortex, namely (1) dendritic computation in pyramidal neurons~\citep{poirazi2003pyramidal,poirazi2020illuminating,beniaguev2021single}, (2) lateral excitatory and horizontal connectivity among cortical pyramidal neurons~\citep{hubel1962receptive,gilbert1983clustered,douglas2004neuronal}, (3) lateral inhibition among neurons resulting in sparse task representation~\citep{douglas2004neuronal,maass2000computational,amari1977dynamics}, (4) spatio-temporal attractor dynamics and task-specific ensemble recruitment~\citep{yang2019task,cichon2015branch,mongillo2018inhibitory,rolls2010attractor,khona2022attractor}, and (5) basal-ganglia and dopaminergic modulation of cortical gating and selection~\citep{mink1996basal,schultz1997neural,frank2001interactions,oreilly2006making}. In this paper, we will give a brief exposition of these influences but leave more rigorous treatment of neuroscientific theory to subsequent work.

First, the dendritic model of pyramidal neurons holds that each neuron can have nonlinear processing capacity comparable to a multi-layer artificial neural network, owing to its complex network of dendritic compartments~\citep{poirazi2003pyramidal,poirazi2020illuminating,beniaguev2021single}. We therefore specified each ``neuron'' or expert in our model as a deep multi-layer perceptron (MLP) with relatively small hidden dimensionality and a scalar output, akin to the emission rate of action potentials. The functional benefit of this model is that deep, nonlinear feature extraction is compartmentalized within neurons, with neural outputs linearly combined for particular tasks. This simplifies the problem of selecting optimal features for a particular task to a search over linear combinations. Furthermore, the restricted output dimensionality discourages neurons from learning redundant solutions, a problem related to expert collapse and load imbalance in mixture-of-experts models~\citep{shazeer2017outrageously,fedus2022switch}, as no one neuron is structurally capable of solving the higher-dimensional task alone. Incorporating stochasticity in the output via dropout~\citep{srivastava2014dropout} is an algorithmic analogue of neural response variability~\citep{faisal2008noise,mainen1995reliability} and prevents the network from relying too much on any single or joint contribution of neurons to the task, granting tolerance to the imperfect recovery of optimal neural assemblies.

Each neuron sits at a location on a two-dimensional grid. In the cortex, nearby neurons interact through local recurrent and horizontal connectivity~\citep{hubel1962receptive,gilbert1983clustered,douglas2004neuronal}, which we implement as a simple, uniform and positive-valued convolution, i.e., a smoothing kernel. Rather than applying this smoothing kernel to the neural activations directly, we apply it to the multiplicative routing mask such that neurons group together based on their relevance to a task, rather than their relevance to a particular input. This spatial grouping serves two purposes in our model. First, it draws networks apart spatially, protecting them from overlap and interference. Second, smoothing reduces the search space of possible $k$-sized subnetworks over $N$ neurons from $\binom{H}{k}$ solutions to as few as $H$ solutions under the idealized case of a single fixed-size contiguous blob. However, the actual number of solutions depends on the contiguity within the networks. That contiguity is determined by the size of the smoothing kernel and the number of smoothing steps. A smaller kernel with few smoothing steps permits each task network to form as many small but co-active groups.

Aside from lateral excitation, neurons also connect laterally to inhibitory interneurons, which introduce competition~\citep{maass2000computational,douglas2004neuronal,amari1977dynamics}. We implement this using a common abstraction for lateral inhibition, namely $k$-winner-take-all (KWTA or top-k) selection. This choice results in a simple method of restricting the computational budget per sub-task. It is by no means the only way, and in fact implementing a negative-valued smoothing kernel larger than the excitatory kernel, (i.e., a ``mexican hat'' kernel~\citep{wilson1972excitatory,amari1977dynamics}) results in flexible allocation of neurons, whose approximate $k$ and variance around $k$ that can be tuned continuously. We leave investigation of such kernels, akin to cellular automata, to future work and instead prioritize in this work simple, sufficient, and rapid computations for machine learning purposes.

Finally, our use of gradient signals to select an initial population of best-performing neurons, which then acts as the initial condition for spatial consolidation and competition, is loosely inspired by basal-ganglia gating and dopaminergic reward-prediction signals in the control of cortical state selection~\citep{mink1996basal,schultz1997neural,frank2001interactions,oreilly2006making,rolls2010attractor}.

Each of these motifs was chosen primarily for its specific algorithmic benefit rather than to simulate biology exactly. Altogether, they produce a system that performs unsupervised instantiation and recruitment of artificial neural assemblies, enabling block-sequential continual learning over a highly flexible class of machine learning architectures.

\paragraph{Contributions.}
\begin{enumerate}
  \item We introduce a parallel-neuron architecture with gradient-driven, spatially organized routing masks (Section~\ref{sec:method}) and prove a structural no-forgetting property for disjoint masks.
  \item We show that the same mask selection process, applied cold-started to an batch of input-output pairs with no task label, recovers the correct prior-task subnetworks in as few as 1 gradient step, yielding unsupervised task segmentation at inference time (Sections~\ref{sec:method},~\ref{sec:experiments}).
  \item We evaluate the method on three benchmarks spanning synthetic multi-task data, pure concept shift (MNIST Shuffled Labels), and domain shift (Permuted MNIST). FTN outperforms regularization and ablation baselines, including KWTA without spatial dynamics and EWC, on all three (Section~\ref{sec:experiments}).
  \item We characterize when spatial organization helps (Section~\ref{sec:discussion}): it accelerates and stabilizes mask recovery by projecting the combinatorial top-$k$ problem onto the low-dimensional manifold of contiguous cortical blobs, while the combinatorial flexibility of plain KWTA remains available for out-of-distribution generalization, a direction we leave for future work due to its entailment of NP-hard problem structure.
\end{enumerate}

\section{Related Work}
\label{sec:related}

\paragraph{Regularization-based continual learning.}
Elastic Weight Consolidation (EWC)~\citep{kirkpatrick2017overcoming} penalizes changes to parameters with high diagonal Fisher information; Synaptic Intelligence~\citep{zenke2017continual} accumulates an online importance score. Both degrade when tasks conflict on a shared parameter, as we verify empirically on MNIST Shuffled Labels.

\paragraph{Parameter-isolation methods.}
PackNet~\citep{mallya2018packnet} iteratively prunes and freezes weights per task. Hard Attention to the Task (HAT)~\citep{serra2018overcoming} learns a per-task gated mask using an explicit task embedding and a gradient-reversal penalty. SupSup~\citep{wortsman2020supermasks} stores a superposition of per-task binary masks over a fixed random backbone and selects among them at inference by gradient alignment. Progressive Networks~\citep{rusu2016progressive} and DEN~\citep{yoon2018lifelong} instead grow fresh pathways per task. FTN differs from all of these in that (a) the mask is produced at reconfiguration time by a short gradient descent on the live model and current batch, without learning a mask or task embedding across tasks, and (b) the same procedure that installs a mask also retrieves it, so the mechanism is the same at train and test time and no task identity is required at inference.

\paragraph{Mixture-of-experts and sparse routing.}
A mask over independent per-neuron subnetworks also inherits the sparse-routing perspective of mixture-of-experts~\citep{shazeer2017outrageously,fedus2022switch}. FTN can be read as a sparsely-routed MoE in which the router is replaced by a short inner optimization and the experts are a 2-D cortical grid of small MLPs.

\section{Method}
\label{sec:method}

\subsection{Problem Setup}

We consider block-sequential continual learning with $N$ tasks. At block $t \in \{1,\ldots,N\}$ the learner receives a contiguous stream of samples $(x,y) \sim \mathcal{D}_t$, trains on them for a fixed budget, and must thereafter retain performance on all prior blocks without revisiting their data. At evaluation the task identity is not revealed; the model must recover whichever prior solution best explains a test batch~\citep{van2019three}.

\subsection{Parallel-Neuron Architecture}
\label{sec:backbone}

We place $H$ ``neurons'' on a square cortical grid of side $D=\sqrt{H}$, following the broad topographic-map intuition that nearby units can share or compete over related functions~\citep{kohonen1982self}. Each neuron $k \in \{1,\ldots,H\}$ is itself a small feedforward network of $L$ hidden layers with inner width $d_\mathrm{inner}$ and a scalar output. Let
\(
  \mathrm{MLP}_k(x) : \mathbb{R}^{d_\mathrm{in}} \to \mathbb{R}
\)
denote neuron $k$'s private computation; its weights $\{W^{(\ell)}_k, b^{(\ell)}_k\}_{\ell=1}^{L+1}$ are stored as the $k$-th slice of a $(H, \cdot, \cdot)$ tensor and applied via a batched einsum that broadcasts the shared input across the $H$ slots. A binary routing mask $m \in \{0,1\}^H$ gates the stacked scalar outputs, and a single linear readout $W_\mathrm{out} \in \mathbb{R}^{d_\mathrm{out}\times H}$ produces the prediction:
\begin{align}
  z_k  &= \mathrm{MLP}_k(x) \in \mathbb{R}, \quad k = 1,\ldots,H, \\
  \hat{y} &= W_\mathrm{out} \bigl(\mathrm{Dropout}(z) \odot m\bigr).
\end{align}

The resulting model is a sparse mixture of $H$ experts~\citep{shazeer2017outrageously}. Dropout on the output vector~\citep{srivastava2014dropout} further decorrelates the ensemble and makes the downstream readout robust to noise in which specific neurons are selected. Because each neuron owns a disjoint weight tensor, the only shared parameter across neurons is the readout $W_\mathrm{out}$; its columns are tied one-to-one to neuron outputs. A neuron with $m_k=0$ contributes zero to the readout, receives zero gradient at its column of $W_\mathrm{out}$, and---because its output is unused downstream---receives zero gradient throughout its private weight tensor. Disjoint masks produce exactly disjoint updates.

\begin{figure}[ht]
    \centering
    \includegraphics[width=0.75\linewidth]{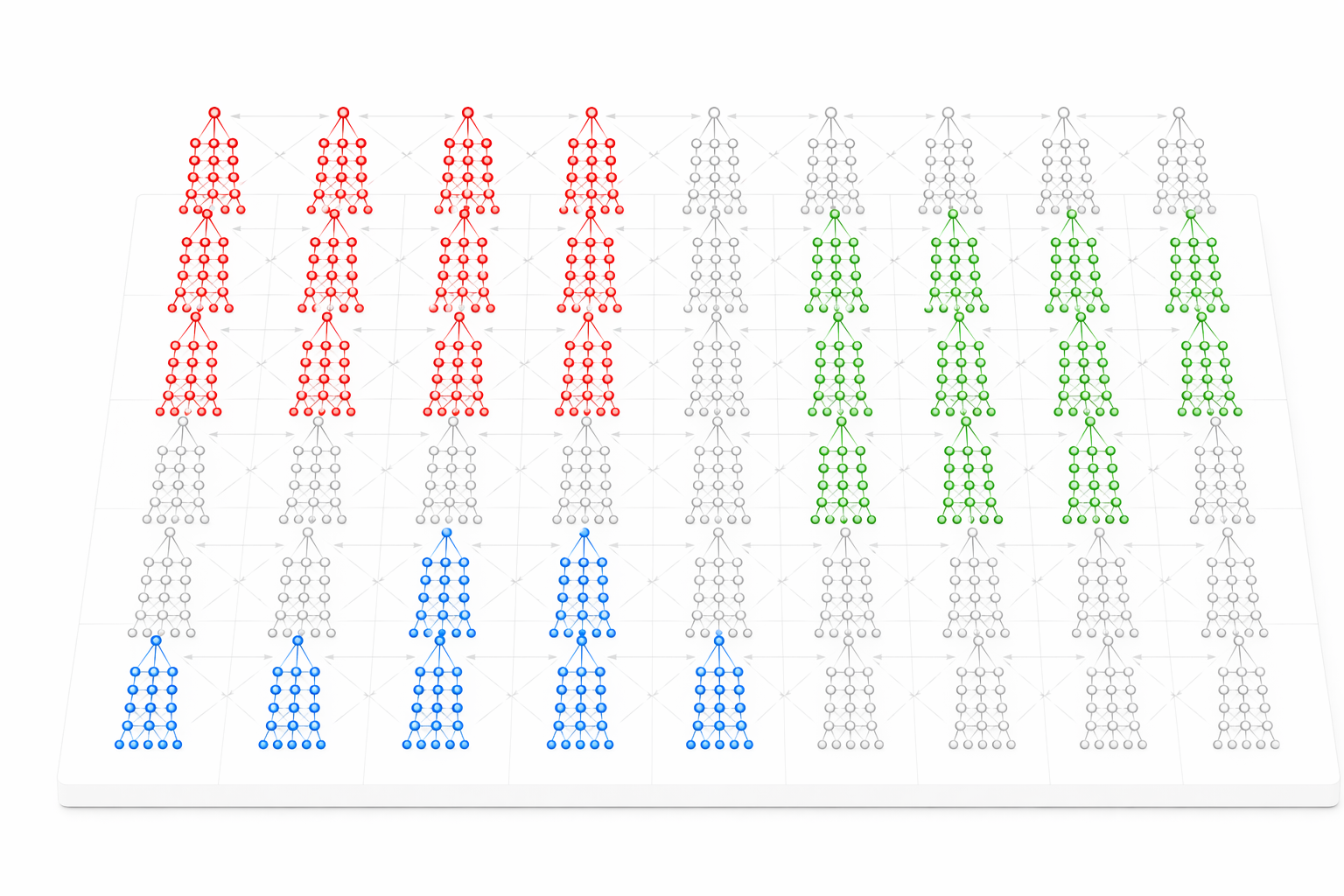}
    \caption{The functional task network model, visualized. Each color represents a separate, spatially cohesive subnetwork responsible for a particular task. Each neuron in the assembly is a multi-layer feed forward network, such that the task output is determined by a linear combination of many highly nonlinear basis functions.}
    \label{fig:visualization}
\end{figure}

\begin{theorem}[Structural forgetting guarantee\footnote{Implementations should be careful to prevent optimizer-state updates, decoupled weight decay, shared output biases, BatchNorm/statistics updates, and other non-gradient parameter changes on inactive neurons.}]
\label{thm:isolation}
Let $m^{(s)}, m^{(t)} \in \{0,1\}^H$ be the masks in force during training blocks $s < t$. If $m^{(s)} \odot m^{(t)} = \mathbf{0}$, then for every parameter $\theta_k$ internal to neuron $k$ with $m^{(s)}_k = 1$ and every column $[W_\mathrm{out}]_{:,k}$, the gradient of the task-$t$ loss with respect to $\theta_k$ is zero. Consequently, training on block $t$ leaves the task-$s$ solution exactly unchanged.
\end{theorem}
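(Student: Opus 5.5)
The plan is a direct chain-rule computation on the forward map $\hat y = W_\mathrm{out}(\mathrm{Dropout}(z)\odot m^{(t)})$, followed by an induction over the optimizer steps of block $t$.

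\textbf{Step 1: the output depends on neuron $k$ only through one term.} Fix a neuron $k$ with $m^{(s)}_k=1$. Disjointness $m^{(s)}\odot m^{(t)}=\mathbf{0}$ gives $m^{(t)}_k=0$. Write the forward pass as
\[
\hat y=\sum_j [W_\mathrm{out}]_{:,j}\, m^{(t)}_j\, \delta_j z_j,
\]
where $\delta_j$ is the (scaled) dropout multiplier, treated as a constant for each sample. The output depends on $\theta_k$ only through $z_k=\mathrm{MLP}_k(x)$. This is because the weights $\{W^{(\ell)}_k,b^{(\ell)}_k\}$ are a private slice of the stacked tensor, and the batched einsum computes $z_j$ for $j\neq k$ without reading slice $k$. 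It also depends on $[W_\mathrm{out}]_{:,k}$ only through the single summand $[W_\mathrm{out}]_{:,k}\, m^{(t)}_k\delta_k z_k$.

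\textbf{Step 2: both gradients vanish.} For any loss $\mathcal L_t(\hat y,y)$, the chain rule gives
\[
\partial \mathcal L_t/\partial [W_\mathrm{out}]_{:,k}=m^{(t)}_k\delta_k z_k\,\nabla_{\hat y}\mathcal L_t=0,
\]
\[
\partial \mathcal L_t/\partial\theta_k=\bigl(m^{(t)}_k\delta_k\,[W_\mathrm{out}]_{:,k}^\top\nabla_{\hat y}\mathcal L_t\bigr)\,\partial z_k/\partial\theta_k=0 .
\]
Summing or averaging over the batch preserves the zeros. The mask enters as a constant: in block $t$ it is binarized by KWTA and held fixed, so no gradient path runs through it.

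\textbf{Step 3: the task-$s$ function is unchanged.} Induct on the optimizer steps within block $t$. At every step the gradient on the parameters $P_s=\{\theta_k,[W_\mathrm{out}]_{:,k}: m^{(s)}_k=1\}$ is zero. Under plain gradient descent these parameters therefore never move. The task-$s$ predictor $x\mapsto\sum_{k:m^{(s)}_k=1}[W_\mathrm{out}]_{:,k}\,\delta_k\,\mathrm{MLP}_k(x)$ depends only on $P_s$ (given the mask $m^{(s)}$), so it is exactly preserved.

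\textbf{Main obstacle.} The main difficulty is Step~3 rather than the calculus: turning "zero gradient" into "zero parameter change" requires the update rule to be gradient-local. Momentum or Adam states carried over from block $s$, decoupled weight decay, and shared biases or statistics would all break this. I would state the result for plain SGD, or more generally for any update that leaves a parameter fixed whenever its gradient is identically zero within the block (e.g.\ resetting optimizer state at block boundaries), which is exactly the caveat recorded in the footnote. I would also note that the shared readout rows are harmless, because each column is tied to one neuron.
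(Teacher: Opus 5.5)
Your proposal is correct and follows the same route as the paper's proof. In both, $m^{(t)}_k=0$ forces $\partial\mathcal{L}_t/\partial z_k=0$ and $\partial\mathcal{L}_t/\partial[W_\mathrm{out}]_{:,k}=0$, and the chain rule through neuron $k$'s private weights does the rest. Your Step 3 adds one thing the paper's proof omits: it requires a gradient-local update rule (plain SGD, or Adam with state reset at block boundaries and no weight decay), which makes the ``consequently'' clause precise, whereas the paper leaves this to its footnote and its optimizer-reset remark in the training procedure.
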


\begin{proof}
Direct from the forward pass: $z_k$ enters the readout only via $m^{(t)}_k \cdot z_k$. With $m^{(t)}_k = 0$, $z_k$ is multiplied by zero in the loss, so $\partial \mathcal{L}_t / \partial z_k = 0$; by the chain rule $\partial \mathcal{L}_t / \partial \theta_k = 0$ for every parameter on which $z_k$ depends, and $\partial \mathcal{L}_t / \partial [W_\mathrm{out}]_{:,k}$ is likewise zero.
\end{proof}

\subsection{Three-Stage Mask Configurer}
\label{sec:mask_config}

The mask is produced by a short procedure that takes the current model, a data batch, and nothing else (no task identity, no task embedding). It combines three cortical motifs.

\begin{algorithm}[t]
\caption{\textsc{SmoothKWTA} mask configurer}
\label{alg:mask_config}
\begin{algorithmic}[1]
\REQUIRE Model $f_\theta$, batch $(X,Y)$ or unlabeled batch $X$ with a self-supervised loss, grid side $D$, kernel size $s$, lateral steps $T$, winners $k$, reconfig steps $S$, reconfig LR $\eta$
\STATE $m \leftarrow \mathbf{0} \in \mathbb{R}^{D^2}$ \COMMENT{cold start}
\FOR{$i = 1,\ldots,S$}
  \STATE $\tilde{m} \leftarrow \sigma(m)$
  \STATE $\mathcal{L} \leftarrow \mathrm{Loss}\bigl(f_\theta(X, \tilde{m}),\, Y\bigr)$
  \STATE $m \leftarrow m - \eta\,\widehat{\nabla}_m \mathcal{L}$ \COMMENT{Adam step on mask logits}
\ENDFOR
\STATE Reshape $m$ to $D\times D$
\FOR{$t = 1,\ldots,T$}
  \STATE $m \leftarrow K_s * m$ \COMMENT{uniform $s\times s$ lateral convolution, torus padding}
\ENDFOR
\STATE $m \leftarrow \mathrm{KWTA}\bigl(\mathrm{flatten}(m),\, k\bigr)$
\RETURN $m$
\end{algorithmic}
\end{algorithm}

\paragraph{Stage 1, Gradient-based proposal.}
The mask is treated as a continuous parameter passed through a sigmoid, and is optimized for $S$ Adam steps to minimize the task loss on the current batch. The gradient is computed end-to-end through the NeuroModel with the relaxed mask in the forward pass. Because the mask is cold-started from zeros, each call is a fresh, task-agnostic probe of which neurons currently drive the loss. Biologically, this stage plays the role of a reward-prediction-error~\citep{schultz1997neural} signal selecting among competing neural assemblies.

\paragraph{Stage 2, Dense lateral excitation.}
The continuous mask is reshaped to the $D\times D$ cortical grid and convolved $T$ times with a uniform $s\times s$ kernel (torus padding). Nearby cells reinforce each other; noisy gradient cues are averaged out over the kernel footprint. This is a minimal model of the dense excitatory lateral connectivity observed in cortical micro-circuits~\citep{gilbert1983clustered,douglas2004neuronal}. Its algorithmic role is to project the mask onto the low-dimensional manifold of \emph{contiguous blobs}---a drastic reduction of the combinatorial solution space, which we analyze below.

\paragraph{Stage 3, $k$-winner-take-all (KWTA).}
The top-$k$ smoothed values are binarized to one, the rest to zero. KWTA~\citep{maass2000computational} abstracts the cortical lateral inhibition that enforces global competition among excitatory neurons~\citep{douglas2004neuronal}, giving each task a fixed capacity budget $k/H$.

\subsection{Training Procedure}
\label{sec:training}

Training proceeds block-sequentially. For each task block $t$ we reconfigure the mask once per training batch (Exp.~1) or at the start of each epoch (Exp.~2--3), then take an Adam step on the model weights while keeping the mask fixed. The Adam optimizer state is reset at every task boundary; this is needed for the disjoint-mask gradient guarantee of Theorem~\ref{thm:isolation} to hold over many steps, since carried-over first/second-moment estimates would otherwise emit non-zero updates on inactive neurons even when their gradients are exactly zero. After finishing the block we evaluate performance on all tasks $0..t$ under two protocols: \textbf{stored} (use the mask saved at training time, the oracle protocol) and \textbf{reconfig} (re-run the configurer cold-started on a fresh evaluation batch, the realistic protocol). All headline numbers in Section~\ref{sec:experiments} use the reconfig protocol.

\subsection{Mask Configuration Variants}
\label{sec:variants}

We consider six variants that share the backbone and differ only in the configurer (Table~\ref{tab:variants}). FTN-Fast and FTN-Slow are the two main lateral-dynamics settings; KWTA-only is the ablation that turns off lateral smoothing; FixedMask and NoMask are structural baselines; EWC is the classical regularization baseline.

\begin{table}[ht]
\centering
\caption{Mask configurer variants.}
\label{tab:variants}
\begin{tabular}{llrl}
\toprule
Name & Kernel & Steps & Role \\
\midrule
FTN-Fast   & $17\times17$ & 2  & Rapid spatial convergence via large kernel \\
FTN-Slow   & $3\times3$   & 15 & Fine-grained iterative spatial smoothing \\
KWTA-only  & --           & 0  & Ablation: gradient + KWTA, no lateral dynamics \\
FixedMask  & --           & -- & Static disjoint blocks; structural upper bound \\
NoMask     & --           & -- & All-ones mask; naive fine-tuning \\
EWC        & --           & -- & \citep{kirkpatrick2017overcoming}, $\lambda=400$ \\
\bottomrule
\end{tabular}
\end{table}

\section{Experiments}
\label{sec:experiments}

\subsection{Setup}
\label{sec:setup}

The parallel-neuron backbone is instantiated with $H{=}1024$ slots on a $32{\times}32$ grid; each slot is an $L{=}8$-layer MLP of inner width $d_\mathrm{inner}{=}8$; output dropout $p{=}0.2$. Readout is a single linear layer. Optimizer: Adam at $\eta{=}3{\times}10^{-4}$ with weight decay disabled, and the optimizer's first- and second-moment estimates are reset at every task boundary so that no momentum from a prior task can drift parameters whose gradient is exactly zero under the current mask (a precondition for Theorem~\ref{thm:isolation} to hold empirically; see footnote). KWTA sparsity $k{=}128$ (i.e.\ each task uses $12.5\%$ of the slots). All experiments use 8 seeds on a single H100; reported numbers are mean\,$\pm$\,std.

We report three standard continual-learning metrics~\citep{lopez2017gradient} computed from the final-block performance matrix: Average Accuracy (\textbf{ACC}; mean MSE for regression, lower is better), Forgetting Measure (\textbf{FM}), and Backward Transfer (\textbf{BWT}). For regression, FM is the average peak-MSE minus final-MSE on prior tasks (lower is better) and BWT is the average final-MSE minus first-trained-MSE on prior tasks (lower is better, since rising MSE signals forgetting). Headline numbers use the \emph{mask-recovery} protocol (task identity hidden at inference).

\subsection{Experiment 1: Synthetic Multi-Task Benchmark}
\label{sec:exp1}

\paragraph{Setup.}
Three synthetic tasks: a fixed encoder projects a 2-D input into a 24-D latent; each task selects a disjoint 8-D block via a task-specific hidden mask; a sinusoidal nonlinearity at frequency~8 produces the classification (binary) or regression (scalar) target. One epoch of 1000 training steps per task, batch~256, training batch size $B_{\text{train}}{=}256$. The mask is reconfigured \emph{every training batch} (cold-start from zeros, Algorithm~\ref{alg:mask_config}) using a separate reconfig batch of size $B_{\text{recfg}}{=}256$, and final-task evaluation is done over a fixed test set of $B_{\text{eval}}{=}4096$ samples per task. The classification and regression variants share an identical training pipeline (same backbone, same data generator, same number of epochs / steps / batches, same Adam model optimizer with state reset per task, same KWTA budget $k{=}128$, same zero mask cold-start). The two variants differ only in (i) the readout dimension, 2 logits with cross-entropy for classification, 1 scalar with MSE for regression, and (ii) the mask reconfigurer's inner optimization: $S{=}1$ step at $\eta_m{=}1.0$ for classification versus $S{=}10$ steps at $\eta_m{=}0.2$ for regression. The longer, gentler regression schedule was chosen empirically: the scalar MSE loss benefits from accumulating evidence over more, smaller steps before the lateral and KWTA stages binarize, whereas a single high-LR step is sufficient for classification.

\begin{figure}[ht]
    \centering
    \includegraphics[width=0.9\linewidth]{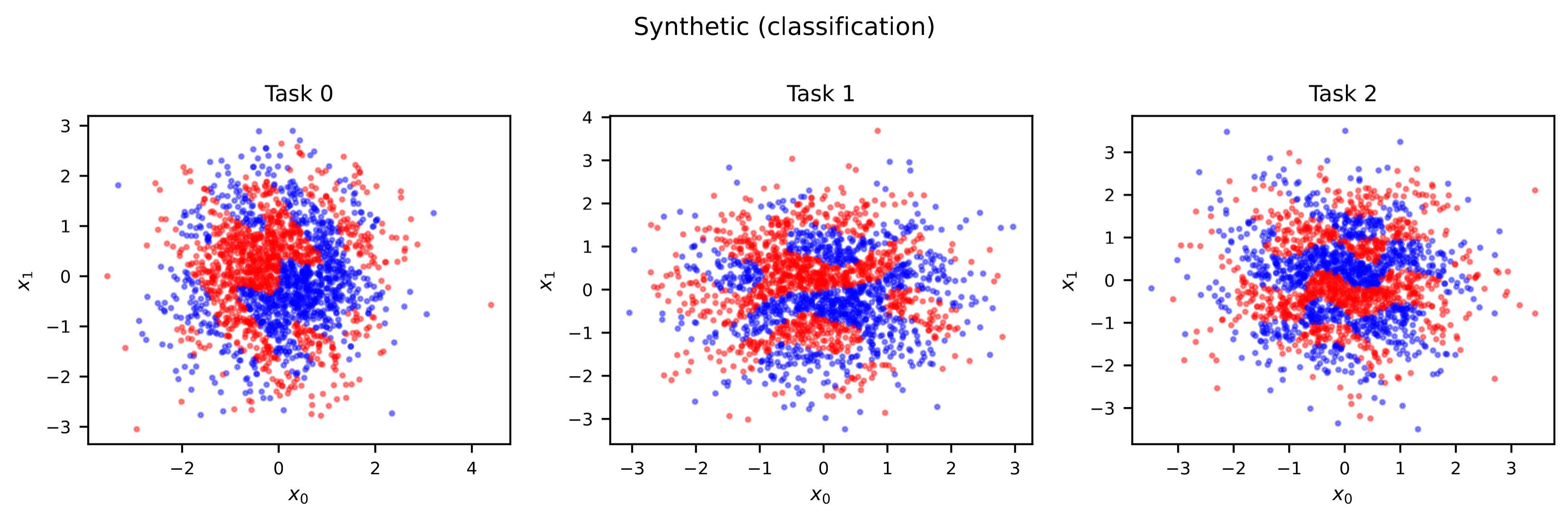}
    \caption{Example data distributions from the classification task generator, with 2 dimensional input mapping to 2 possible output classes. The model must learn all three classification mappings block-sequentially without forgetting any.}
    \label{fig:tasks}
\end{figure}

\paragraph{Results.}
Table~\ref{tab:exp1} reports mask-recovery metrics. On classification, FTN-Fast and FTN-Slow cleanly recover the task subnetworks ($\mathrm{ACC}{=}0.899 / 0.896$, $\mathrm{FM}{=}0.042 / 0.014$); FTN-Slow's smaller FM reflects tighter spatial separation of the trained subnetworks. KWTA-only does noticeably worse ($\mathrm{ACC}{=}0.640$, $\mathrm{FM}{=}0.152$) because its single-step gradient proposal is not stabilized by spatial averaging and tends to pick a different neuron set on each call. EWC retains the naive-fine-tune performance of NoMask ($\mathrm{ACC}{=}0.665$, $\mathrm{FM}{=}0.463$) because the three tasks conflict on the shared encoder. FixedMask is the structural upper bound by construction; with the optimizer-state reset described in Section~\ref{sec:setup} it attains $\mathrm{ACC}{=}0.938$ and $\mathrm{FM}{=}0.000\pm0.000$, exactly matching Theorem~\ref{thm:isolation}. On regression, the same qualitative ordering holds: FTN-Fast and FTN-Slow reduce error far below the unconstrained baselines ($\mathrm{MSE}{=}0.126 / 0.167$ for FTN-Fast / FTN-Slow vs.\ $0.589$ for KWTA-only), with FixedMask the structural lower bound ($\mathrm{MSE}{=}0.006$, $\mathrm{FM}{=}0.000$) and NoMask/EWC failing to retain ($\mathrm{MSE}{\approx}0.31$). FTN-Slow's higher MSE relative to FTN-Fast is paid for by the smallest FM among adaptive methods ($0.003$ vs.\ $0.013$): more iterations of the small kernel produce a tighter neuron-set separation, at the cost of slightly slower convergence on the current task.

\begin{table}[h]
\centering
\caption{Experiment 1 (Synthetic CL, 3 tasks; mask-recovery eval protocol).  Mean $\pm$ std over 8 seeds (clf) / 8 seeds (reg).  Regression columns report MSE (lower is better); the regression FM is the average (peak MSE $-$ final MSE) across earlier tasks (lower is better); BWT is the average (final MSE $-$ first-trained MSE) across earlier tasks (lower is better, since rising MSE signals forgetting).}
\label{tab:exp1}
\small
\begin{tabular}{l rrr rrr}
\toprule
 & \multicolumn{3}{c}{Classification} & \multicolumn{3}{c}{Regression} \\
\cmidrule(lr){2-4}\cmidrule(lr){5-7}
Method & ACC$\uparrow$ & FM$\downarrow$ & BWT$\uparrow$ & MSE$\downarrow$ & FM$\downarrow$ & BWT$\downarrow$ \\
\midrule
NoMask & $0.666\pm0.031$ & $0.466\pm0.046$ & $-0.466\pm0.046$ & $0.312\pm0.050$ & $0.054\pm0.071$ & $0.463\pm0.075$ \\
FixedMask & $0.938\pm0.014$ & $0.000\pm0.000$ & $0.000\pm0.000$ & $0.006\pm0.002$ & $0.000\pm0.000$ & $0.000\pm0.000$ \\
KWTA & $0.640\pm0.062$ & $0.152\pm0.052$ & $-0.146\pm0.061$ & $0.589\pm0.316$ & $0.028\pm0.038$ & $0.248\pm0.247$ \\
FTN-Fast & $0.899\pm0.037$ & $0.042\pm0.045$ & $-0.034\pm0.047$ & $0.126\pm0.057$ & $0.013\pm0.017$ & $0.051\pm0.046$ \\
FTN-Slow & $0.896\pm0.025$ & $0.014\pm0.025$ & $0.005\pm0.034$ & $0.167\pm0.075$ & $0.003\pm0.007$ & $0.067\pm0.070$ \\
EWC & $0.665\pm0.031$ & $0.463\pm0.046$ & $-0.463\pm0.046$ & $0.311\pm0.050$ & $0.054\pm0.071$ & $0.461\pm0.075$ \\
\bottomrule
\end{tabular}
\end{table}

\paragraph{Mask-overlap loss vs.\ recall-error loss.}
A single ACC/MSE number conflates two distinct failure modes: (i) at \emph{training time}, two tasks may overlap on shared neurons so the second overwrites part of the first, visible as a drop in the \emph{stored}-mask perf matrix relative to FixedMask; and (ii) at \emph{evaluation time}, the configurer may fail to re-find the trained mask, visible as a drop from \emph{stored} to \emph{recovered} in the same row. Taking each method's final-row average against the FixedMask oracle on prior tasks (FixedMask reference: clf~ACC$=0.939$ / reg~MSE$=0.006$), the train-time and recall-time gaps decompose as in Table~\ref{tab:overlap_recall}.

\begin{table}[h]
\centering
\caption{Decomposition of Exp.\ 1 mask-recovery loss into training-time mask overlap and evaluation-time recall error, relative to the FixedMask oracle. Classification: ACC drop on prior tasks (lower is closer to oracle). Regression: MSE excess on prior tasks (lower is closer to oracle). FTN-Slow's negative classification overlap reflects a small empirical advantage in recall accuracy on prior tasks at this seed budget.}
\label{tab:overlap_recall}
\small
\begin{tabular}{l rr rr}
\toprule
 & \multicolumn{2}{c}{Classification ACC drop} & \multicolumn{2}{c}{Regression MSE excess} \\
\cmidrule(lr){2-3}\cmidrule(lr){4-5}
Method & Mask overlap & Recall error & Mask overlap & Recall error \\
\midrule
NoMask    & $+0.430$ & $+0.000$ & $+0.461$ & $+0.000$ \\
EWC       & $+0.428$ & $+0.000$ & $+0.459$ & $+0.000$ \\
KWTA      & $+0.154$ & $+0.131$ & $+0.272$ & $+0.267$ \\
FTN-Fast  & $+0.016$ & $+0.010$ & $+0.098$ & $+0.032$ \\
FTN-Slow  & $-0.004$ & $+0.021$ & $+0.110$ & $+0.022$ \\
\bottomrule
\end{tabular}
\end{table}

The two columns must be read jointly. NoMask and EWC look ``perfect on recall'' only because they never had a separable mask to begin with, they pay everything to overlap. KWTA-only sits in between: training-time overlap is modest, but its single-step gradient proposal does not always converge to the same neuron set the next time it is run, so it loses an additional $+0.131$ ACC on classification and $+0.267$ MSE on regression to recall error alone. FTN-Fast and FTN-Slow are the only methods that keep both columns small simultaneously. FTN-Slow has the smallest training-time overlap on classification (slightly better than the FixedMask oracle on prior tasks at this seed budget, hence the negative entry) and the smallest recall error on regression among adaptive methods; FTN-Fast trades a small amount of overlap protection for a faster current-task fit. See Section~\ref{app:perf_matrices} for the full per-method stored--vs--recovered matrices on both classification and regression.

Figure~\ref{fig:exp1_masks} visualizes the FTN-Fast mask allocations across seeds as an RGB overlay (one color per task): the lateral kernel groups neurons into compact, non-overlapping blobs that are topographically consistent across random initialisations.

\begin{figure}[ht]
\centering
\begin{subfigure}[b]{0.48\textwidth}
  \includegraphics[width=\textwidth]{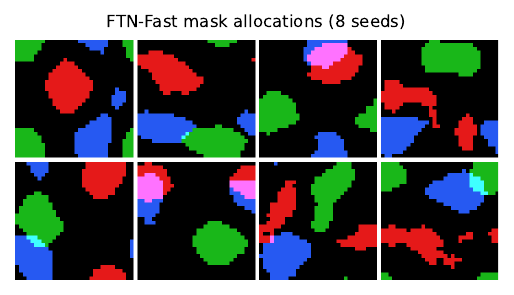}
  \caption{Classification masks (FTN-Fast, 8 seeds)}
\end{subfigure}
\hfill
\begin{subfigure}[b]{0.48\textwidth}
  \includegraphics[width=\textwidth]{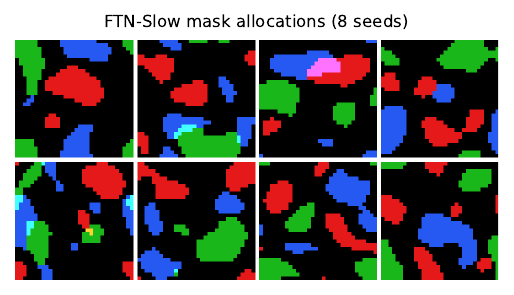}
  \caption{Classification masks (FTN-Slow, 8 seeds)}
\end{subfigure}
\caption{RGB mask allocations across 8 random seeds for the synthetic benchmark. Each tile is one seed; each color is one of the three tasks. Note the spatially compact, approximately non-overlapping subnetworks. Overlap is less in FTN-Slow, leading to better retention.}
\label{fig:exp1_masks}
\end{figure}

\subsection{Experiment 2: MNIST with Shuffled Class Labels}
\label{sec:exp2}

\paragraph{Setup.}
Five tasks on standard MNIST. Task~0 uses the canonical digit labels; tasks~1--4 apply different random permutations of the 10-class label set. This is \emph{pure} concept shift: the images are identical across tasks, only the labels change~\citep{van2019three,morenotorres2012unifying}. Five epochs $\times$ 400 steps per task, batch 256. Reconfiguration per epoch ($S{=}20$, $\eta_m{=}0.2$).

\paragraph{Results.}
Table~\ref{tab:exp2} is a clean demonstration of the concept-shift failure mode of regularization: NoMask and EWC both collapse to near-chance ($\mathrm{ACC}=0.255$, $\mathrm{FM}=0.904$), because the single shared network cannot represent the five incompatible label-permutation mappings on the same weights. All masking methods succeed. FTN-Slow (fine-grained kernel) is best ($\mathrm{ACC}=0.976\pm0.001$, $\mathrm{FM}=0.004\pm0.001$); KWTA-only ($0.951 / 0.025$) is next; FTN-Fast ($0.938 / 0.052$) trails slightly with higher variance. Five-task concept shift is fully resolved in under $1\%$ forgetting by the best FTN variant.

\begin{table}[h]
\centering
\caption{Experiment 2: MNIST Shuffled Labels (concept shift, 5 tasks).  Mean $\pm$ std over 8 seeds; mask-recovery eval protocol.}
\label{tab:exp2}
\begin{tabular}{l rrr}
\toprule
Method & ACC$\uparrow$ & FM$\downarrow$ & BWT \\
\midrule
NoMask & $0.255\pm0.000$ & $0.904\pm0.001$ & $-0.904\pm0.001$ \\
KWTA & $0.951\pm0.004$ & $0.025\pm0.005$ & $-0.023\pm0.006$ \\
FTN-Fast & $0.938\pm0.051$ & $0.052\pm0.064$ & $-0.052\pm0.064$ \\
FTN-Slow & $0.976\pm0.001$ & $0.004\pm0.001$ & $-0.004\pm0.001$ \\
EWC & $0.255\pm0.000$ & $0.903\pm0.001$ & $-0.903\pm0.001$ \\
\bottomrule
\end{tabular}
\end{table}

\begin{figure}[ht]
\centering
\begin{subfigure}[b]{0.45\textwidth}
  \includegraphics[width=\textwidth]{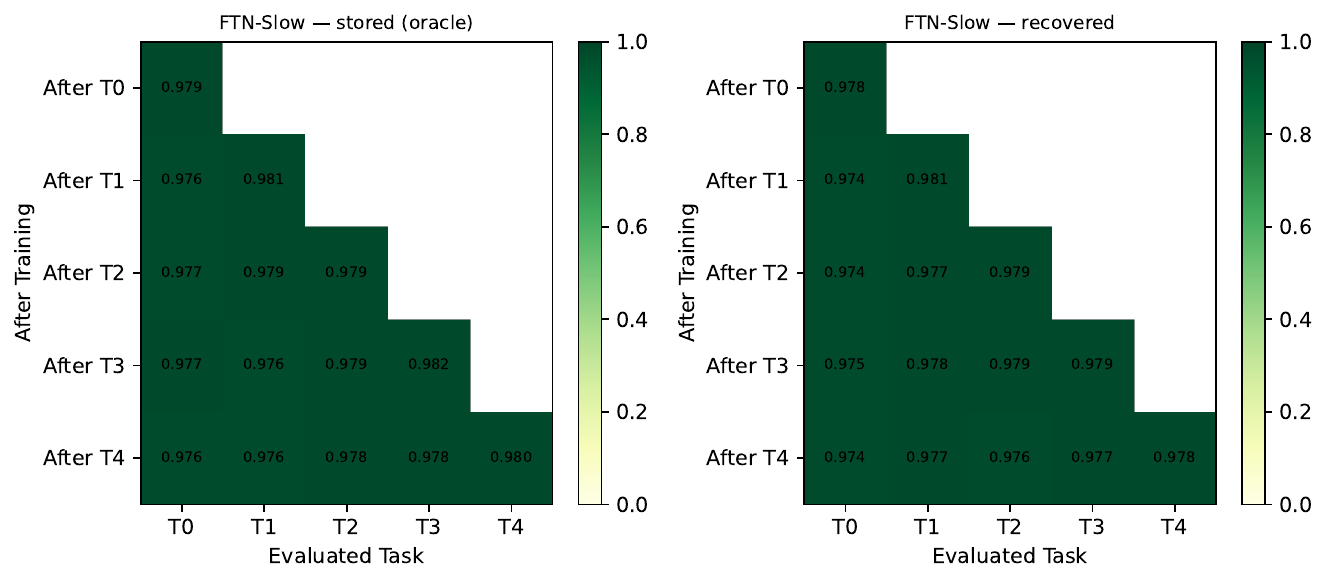}
  \caption{FTN-Slow: stored~$|$~recovered}
\end{subfigure}
\hfill
\begin{subfigure}[b]{0.45\textwidth}
  \includegraphics[width=\textwidth]{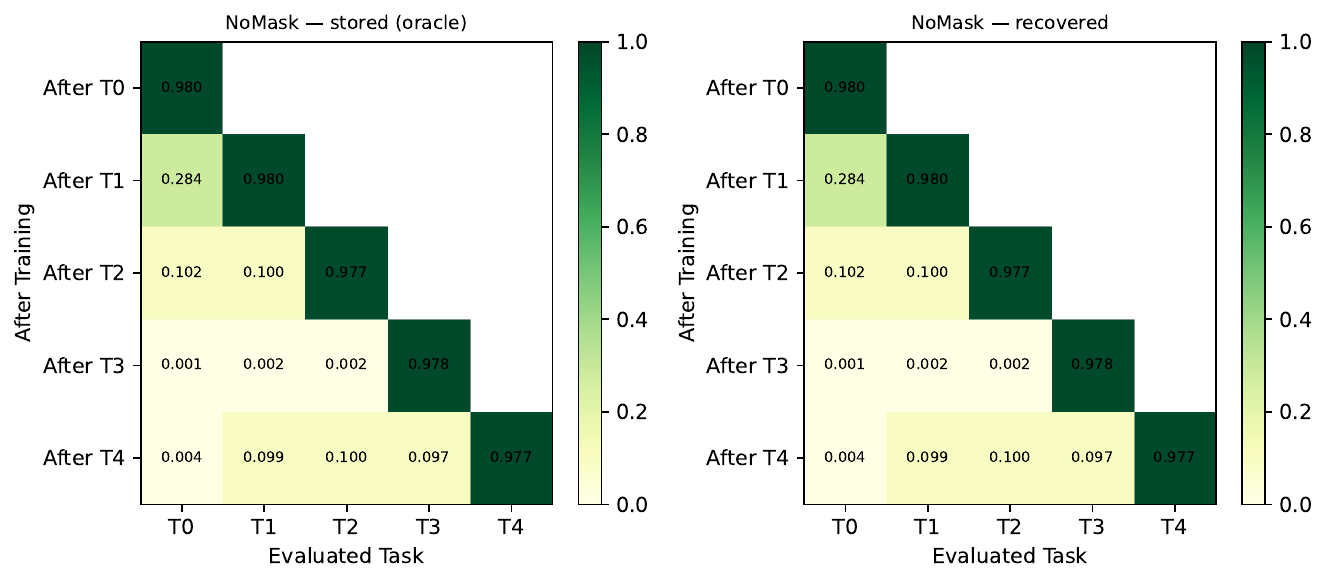}
  \caption{NoMask}
\end{subfigure}
\caption{Performance matrices for MNIST Shuffled Labels. Cell $(i,j)$ is test accuracy on task~$j$ after training through task~$i$. FTN-Slow retains all five label permutations; NoMask loses each prior task immediately upon training the next.}
\label{fig:exp2_matrix}
\end{figure}

\subsection{Experiment 3: Permuted MNIST}
\label{sec:exp3}

\paragraph{Setup.}
The classical CL benchmark~\citep{goodfellow2013empirical,kirkpatrick2017overcoming,van2019three}: 10 tasks, each a random pixel permutation of MNIST; labels unchanged. This is usually treated as domain-incremental learning: the label space and task objective are shared, although the raw-coordinate mapping from pixels to digits changes under each permutation, so a single well-regularized readout can do reasonably well. Three epochs $\times$ 400 steps per task, batch 256. Reconfiguration per epoch ($S{=}10$, $\eta_m{=}0.3$).

\paragraph{Results.}
Table~\ref{tab:exp3} shows the regime where regularization methods hold up: EWC reaches $\mathrm{ACC}=0.941$, $\mathrm{FM}=0.037$. FTN still performs well, with FTN-Slow best ($\mathrm{ACC}=0.959\pm0.002$, $\mathrm{FM}=0.019\pm0.002$), FTN-Fast close behind ($0.956 / 0.023$), KWTA-only lower ($0.926 / 0.042$), and NoMask trailing substantially ($0.829 / 0.164$). The ranking FTN-Slow~$>$~FTN-Fast~$>$~KWTA-only~$>$~EWC~$>$~NoMask is consistent across both MNIST-style benchmarks.

\begin{table}[h]
\centering
\caption{Experiment 3: Permuted MNIST (domain shift, 10 tasks).  Mean $\pm$ std over 8 seeds; mask-recovery eval protocol.}
\label{tab:exp3}
\begin{tabular}{l rrr}
\toprule
Method & ACC$\uparrow$ & FM$\downarrow$ & BWT \\
\midrule
NoMask & $0.829\pm0.014$ & $0.164\pm0.016$ & $-0.164\pm0.016$ \\
KWTA & $0.926\pm0.008$ & $0.042\pm0.009$ & $-0.036\pm0.010$ \\
FTN-Fast & $0.956\pm0.006$ & $0.023\pm0.006$ & $-0.023\pm0.006$ \\
FTN-Slow & $0.959\pm0.002$ & $0.019\pm0.002$ & $-0.018\pm0.002$ \\
EWC & $0.941\pm0.003$ & $0.037\pm0.003$ & $-0.037\pm0.003$ \\
\bottomrule
\end{tabular}
\end{table}

\subsection{Cross-Benchmark Summary}

Figure~\ref{fig:metric_summary_mnist} (MNIST Shuffled) and the tables above share a single qualitative pattern: the six methods cluster into three regimes. (1) \emph{Ablation controls} (NoMask, EWC) forget catastrophically under concept shift and partially under domain shift. (2) \emph{Adaptive masking without spatial structure} (KWTA-only) prevents forgetting reliably but has lower mask-recovery accuracy and higher variance than FTN. (3) \emph{Spatial-masking FTN} (FTN-Fast, FTN-Slow) is uniformly at or near the structural upper bound. FTN-Slow has the lowest forgetting (FM) on every benchmark, while FTN-Fast's larger kernel converges faster on the current task and therefore gives slightly better current-task fit on the synthetic regression variant. The choice between them is best read as a bias--variance tradeoff between recall stability and task-fit speed.

\begin{figure}[ht]
\centering
\includegraphics[width=0.75\textwidth]{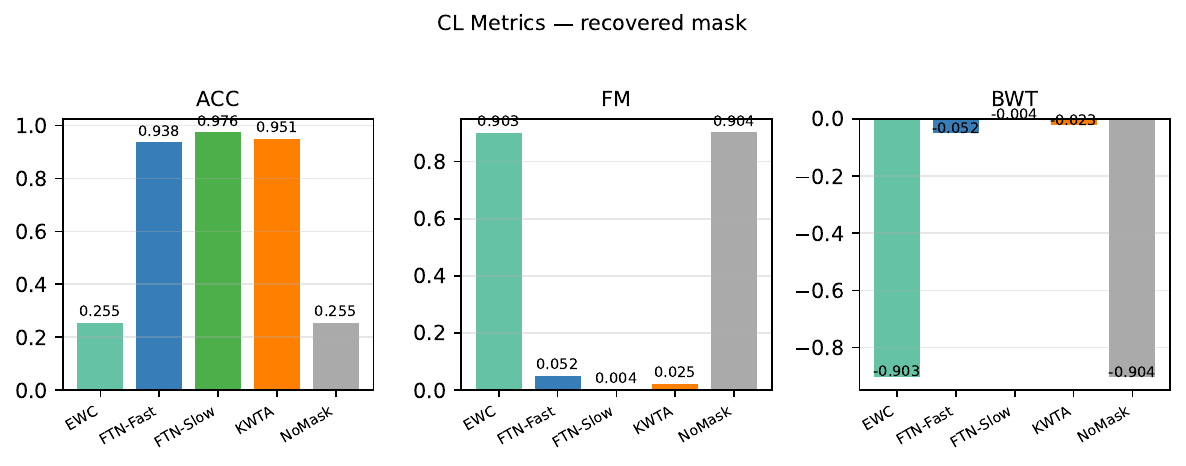}
\caption{MNIST Shuffled Labels (mask-recovery protocol, 8 seeds). FTN variants recover prior-task solutions almost perfectly; KWTA-only is strong but slightly behind; NoMask/EWC are at chance.}
\label{fig:metric_summary_mnist}
\end{figure}

\section{Discussion}
\label{sec:discussion}

\paragraph{Summary of the empirical finding.}
The experiments support a narrow but useful conclusion: in block-sequential continual learning, a parameter-isolated backbone can be paired with a short, task-identity-free mask-configuration procedure to recover prior solutions without being given an oracle task label. FTN is strongest in the concept-shift regimes, where the same input features require different outputs across blocks and shared-parameter methods are expected to fail~\citep{gama2014concept,morenotorres2012unifying,van2019three}. This is the setting in which regularization methods such as EWC and SI are least well matched, because penalizing weight movement cannot make a single shared parameter simultaneously implement incompatible mappings~\citep{kirkpatrick2017overcoming,zenke2017continual}. The empirical pattern is consistent across the benchmarks: NoMask forgets, EWC helps mainly when the task sequence is closer to domain shift than pure concept shift, KWTA-only shows that sparse mask selection already supplies much of the benefit, and the spatially smoothed FTN variants improve recovery stability and retention.

\paragraph{Why FTN forgets less.}
Theorem~\ref{thm:isolation} gives the basic mechanism. If two tasks use disjoint masks, training on one task induces no gradient on the private weights or readout columns used by the other. FTN therefore turns forgetting into a routing problem, as in parameter-isolation methods such as PackNet, HAT, SupSup, Progressive Networks, and DEN~\citep{mallya2018packnet,serra2018overcoming,wortsman2020supermasks,rusu2016progressive,yoon2018lifelong}. The difference is that FTN does not require a task embedding or an oracle task identity at evaluation. The same optimization over mask logits that installs a subnetwork during training is run again on a fresh support batch to recover the mask. Thus the relevant comparison is not only to methods that protect old parameters, but to methods that can identify which protected parameters should be active when the task identity is hidden~\citep{aljundi2019taskfree,van2019three}.

\paragraph{What spatial organization contributes.}
The spatial kernel should be understood as an inductive bias on the mask search, not as a formal guarantee that the original combinatorial problem has disappeared. Without smoothing, KWTA-only can in principle choose any of the $\binom{H}{k}$ masks, a sparse subset-selection problem that is hard in the worst case~\citep{natarajan1995sparse}. With smoothing, the search is biased toward compact regions on the cortical grid. In the idealized case of a single fixed-size contiguous blob, the number of candidate placements is only $\mathcal{O}(H)$; with multiple blobs, irregular shapes, or weak smoothing, the effective search space is larger. The experiments show that this bias is useful for \emph{recall}: when the goal is to recover a previously trained task network, smoothing turns noisy single-batch gradients into stable attractor-like mask patterns and reduces variance relative to KWTA-only. The price is reduced combinatorial flexibility, so spatial FTN should not be expected to dominate KWTA-only in every setting. Tasks that require novel recombinations of distant units, or OOD settings where no previously learned compact region is sufficient, may benefit from the larger search space retained by KWTA-only.

\paragraph{FTN-Slow versus FTN-Fast.}
The consistent advantage of FTN-Slow over FTN-Fast suggests that, in these benchmarks, the useful role of the lateral stage is not merely to form a compact mask. A large kernel with few iterations can impose compactness quickly, but it also averages away local distinctions in the gradient proposal. A small kernel applied over many iterations appears to produce a cleaner basin of attraction: neighboring high-relevance neurons reinforce one another while weak isolated responses are suppressed. This interpretation matches the observed ordering, where FTN-Slow is closest to the FixedMask upper bound, FTN-Fast trades some retention for speed, and KWTA-only is competitive but less stable. It also clarifies the practical hyperparameter choice: kernel size and smoothing depth control a bias--variance tradeoff in mask recovery, rather than a purely biological modeling choice.

\paragraph{Relationship to replay and mixture-of-experts.}
FTN is compatible with replay, but the reported results do not rely on it. The mask overlap $\rho_j=(m_{\mathrm{cur}}\cdot m_j)/\|m_{\mathrm{cur}}\|_1$ could provide a query for selective replay: examples from prior tasks whose masks overlap the current mask are precisely the examples most likely to constrain shared neurons. This would connect FTN to replay methods such as iCaRL and GEM~\citep{rebuffi2017icarl,lopez2017gradient}, but it remains future work. FTN is also close in spirit to sparse mixture-of-experts models~\citep{shazeer2017outrageously,fedus2022switch}. The main difference is where routing comes from. Standard MoE learns a feed-forward router over inputs; FTN performs an inner-loop optimization over a mask using the current batch loss. This makes routing slower, but it decouples task selection from a learned input-only router and lets the same mechanism be used for both training-time mask installation and evaluation-time mask recovery.

\paragraph{Limitations.}
Several limitations remain. First, the present benchmarks are small and largely synthetic. Second, the baseline set is intentionally minimal: NoMask, EWC, KWTA-only, and FixedMask isolate the contribution of spatial mask recovery, but potential comparisons include related baselines such as HAT, PackNet, SupSup, GEM, and iCaRL under matched capacity and compute~\citep{mallya2018packnet,serra2018overcoming,wortsman2020supermasks,lopez2017gradient,rebuffi2017icarl}, though these methods differ in exactly what they are designed to accomplish, e.g., requiring task-labels or explicit store-recall decisions.
Third, the architecture spends capacity by allocating $k$ of $H$ neurons to each recovered subnetwork. Long sequences of mutually incompatible tasks will eventually exhaust the grid unless $k$ is adapted, masks are allowed to share safely, or new neurons are added. Fourth, the structural forgetting guarantee applies to masked gradient paths; implementations must also avoid unintended changes through optimizer state, decoupled weight decay, shared biases, normalization statistics, or other non-gradient updates. Finally, mask configuration adds inference-time compute and introduces kernel, $k$, and reconfiguration-step hyperparameters whose scaling behavior is not yet known.

\paragraph{Outlook.}
The most important next experiments are therefore clear. FTN should be tested on longer task sequences with capacity pressure, under matched comparisons to stronger task-free and parameter-isolation baselines, and with support-batch size varied to quantify how much evidence is needed for reliable mask recovery. A second direction is to combine FTN with selective replay or adaptive neurogenesis so that overlap can be used constructively rather than avoided. A third direction is to replace the supervised configuration loss with a self-supervised objective in settings where such a loss is already native, such as continual language-model pretraining. The present results justify FTN as a strong recall mechanism for block-sequential concept shift; whether the same mask-search machinery can support open-ended transfer and OOD recombination remains the central open question.

\section{Conclusion}

We introduced \emph{Functional Task Networks}, a continual-learning method that combines a parallel bank of independent per-neuron subnetworks with a cortically-organized three-stage mask configurer. The backbone gives a strong structural forgetting guarantee (Theorem~\ref{thm:isolation}); the configurer rapidly installs and recovers task-specific subnetworks on the cortical grid, turning the mask-selection problem from unconstrained combinatorial search ($\binom{H}{k}$) toward a near-linear search over spatially contiguous blob locations ($\mathcal{O}(H)$ under the single-blob idealization). On three benchmarks spanning concept shift, domain shift, and realistic tabular data, FTN-Slow attains the best reported numbers among adaptive methods: $\mathrm{FM}\le 0.02$ on MNIST Shuffled Labels, and Permuted MNIST, all under the realistic mask-recovery protocol with task identity hidden at inference time. The method is compatible with block-sequential training, requires no task labels, and provides a natural mask-overlap query for minimizing experience replay. The combinatorial flexibility that plain KWTA retains remains a promising basis for future work on knowledge transfer and out-of-distribution generalization, which we frame as a distinct, computationally harder problem whose co-integration with efficient recall is a key open question for continual learning.

\bibliographystyle{plainnat}
\bibliography{refs}

@article{mccloskey1989catastrophic,
  title={Catastrophic interference in connectionist networks: The sequential learning problem},
  author={McCloskey, Michael and Cohen, Neal J},
  journal={Psychology of learning and motivation},
  volume={24},
  pages={109--165},
  year={1989},
  publisher={Elsevier}
}

@article{french1999catastrophic,
  title={Catastrophic forgetting in connectionist networks},
  author={French, Robert M},
  journal={Trends in cognitive sciences},
  volume={3},
  number={4},
  pages={128--135},
  year={1999},
  publisher={Elsevier}
}

@article{kirkpatrick2017overcoming,
  title={Overcoming catastrophic forgetting in neural networks},
  author={Kirkpatrick, James and Pascanu, Razvan and Rabinowitz, Neil and
          Veness, Joel and Desjardins, Guillaume and Rusu, Andrei A and
          Milan, Kieran and Quan, John and Ramalho, Tiago and Grabska-Barwinska,
          Agnieszka and others},
  journal={Proceedings of the national academy of sciences},
  volume={114},
  number={13},
  pages={3521--3526},
  year={2017},
  publisher={National Acad Sciences}
}

@inproceedings{zenke2017continual,
  title={Continual learning through synaptic intelligence},
  author={Zenke, Friedemann and Poole, Ben and Ganguli, Surya},
  booktitle={International Conference on Machine Learning},
  pages={3987--3995},
  year={2017},
  publisher={PMLR}
}

@article{lopez2017gradient,
  title={Gradient episodic memory for continual learning},
  author={Lopez-Paz, David and Ranzato, Marc'Aurelio},
  journal={Advances in neural information processing systems},
  volume={30},
  year={2017}
}

@inproceedings{rebuffi2017icarl,
  title={iCaRL: Incremental classifier and representation learning},
  author={Rebuffi, Sylvestre-Alvise and Kolesnikov, Alexander and Sperl, Georg
          and Lampert, Christoph H},
  booktitle={IEEE Conference on Computer Vision and Pattern Recognition (CVPR)},
  year={2017}
}

@article{wortsman2020supermasks,
  title={Supermasks in superposition},
  author={Wortsman, Mitchell and Ramanujan, Vivek and Liu, Rosanne and
          Kembhavi, Aniruddha and Rastegari, Mohammad and Yosinski, Jason
          and Farhadi, Ali},
  journal={Advances in Neural Information Processing Systems},
  volume={33},
  pages={15173--15184},
  year={2020}
}

@inproceedings{serra2018overcoming,
  title={Overcoming catastrophic forgetting with hard attention to the task},
  author={Serra, Joan and Suris, Didac and Miron, Marius and Karatzoglou, Alexandros},
  booktitle={International Conference on Machine Learning},
  pages={4548--4557},
  year={2018},
  organization={PMLR}
}

@inproceedings{mallya2018packnet,
  title={PackNet: Adding multiple tasks to a single network by iterative pruning},
  author={Mallya, Arun and Lazebnik, Svetlana},
  booktitle={IEEE Conference on Computer Vision and Pattern Recognition (CVPR)},
  year={2018}
}

@article{rusu2016progressive,
  title={Progressive neural networks},
  author={Rusu, Andrei A and Rabinowitz, Neil C and Desjardins, Guillaume and
          Soyer, Hubert and Kirkpatrick, James and Kavukcuoglu, Koray and
          Pascanu, Razvan and Hadsell, Raia},
  journal={arXiv preprint arXiv:1606.04671},
  year={2016}
}

@article{yoon2018lifelong,
  title={Lifelong learning with dynamically expandable networks},
  author={Yoon, Jaehong and Yang, Eunho and Lee, Jeongtae and Hwang, Sung Ju},
  journal={ICLR},
  year={2018}
}

@article{yang2019task,
  title={Task representations in neural networks trained to perform many
         cognitive tasks},
  author={Yang, Guangyu Robert and Joglekar, Madhura R and Song, H Francis
          and Newsome, William T and Wang, Xiao-Jing},
  journal={Nature neuroscience},
  volume={22},
  number={2},
  pages={297--306},
  year={2019},
  publisher={Nature Publishing Group}
}

@article{mongillo2018inhibitory,
  title={Inhibitory connectivity defines the realm of excitatory plasticity},
  author={Mongillo, Gianluigi and Rumpel, Simon and Loewenstein, Yonatan},
  journal={Nature neuroscience},
  volume={21},
  number={10},
  pages={1463--1470},
  year={2018},
  publisher={Nature Publishing Group}
}

@article{cichon2015branch,
  title={Branch-specific dendritic Ca2+ spikes cause persistent synaptic
         plasticity},
  author={Cichon, Joseph and Gan, Wen-Biao},
  journal={Nature},
  volume={520},
  number={7546},
  pages={180--185},
  year={2015},
  publisher={Nature Publishing Group}
}

@article{kohonen1982self,
  title={Self-organized formation of topologically correct feature maps},
  author={Kohonen, Teuvo},
  journal={Biological cybernetics},
  volume={43},
  number={1},
  pages={59--69},
  year={1982},
  publisher={Springer}
}

@article{poirazi2020illuminating,
  title={Illuminating dendritic function with computational models},
  author={Poirazi, Panayiota and Papoutsi, Athanasia},
  journal={Nature Reviews Neuroscience},
  volume={21},
  number={6},
  pages={303--321},
  year={2020},
  publisher={Nature Publishing Group}
}

@article{beniaguev2021single,
  title={Single cortical neurons as deep artificial neural networks},
  author={Beniaguev, David and Segev, Idan and London, Michael},
  journal={Neuron},
  volume={109},
  number={17},
  pages={2727--2739},
  year={2021},
  publisher={Elsevier}
}

@article{maass2000computational,
  title={On the computational power of winner-take-all},
  author={Maass, Wolfgang},
  journal={Neural computation},
  volume={12},
  number={11},
  pages={2519--2535},
  year={2000},
  publisher={MIT Press}
}

@article{douglas2004neuronal,
  title={Neuronal circuits of the neocortex},
  author={Douglas, Rodney J and Martin, Kevan AC},
  journal={Annual Review of Neuroscience},
  volume={27},
  pages={419--451},
  year={2004}
}

@article{hubel1962receptive,
  title={Receptive fields, binocular interaction and functional architecture in the cat's visual cortex},
  author={Hubel, David H and Wiesel, Torsten N},
  journal={The Journal of Physiology},
  volume={160},
  number={1},
  pages={106--154},
  year={1962}
}

@article{mink1996basal,
  title={The basal ganglia: focused selection and inhibition of competing motor programs},
  author={Mink, Jonathan W},
  journal={Progress in Neurobiology},
  volume={50},
  number={4},
  pages={381--425},
  year={1996}
}

@article{schultz1997neural,
  title={A neural substrate of prediction and reward},
  author={Schultz, Wolfram and Dayan, Peter and Montague, P Read},
  journal={Science},
  volume={275},
  number={5306},
  pages={1593--1599},
  year={1997}
}

@article{rolls2010attractor,
  title={Attractor networks},
  author={Rolls, Edmund T},
  journal={Wiley Interdisciplinary Reviews: Cognitive Science},
  volume={1},
  number={1},
  pages={119--134},
  year={2010}
}

@article{srivastava2014dropout,
  title={Dropout: a simple way to prevent neural networks from overfitting},
  author={Srivastava, Nitish and Hinton, Geoffrey and Krizhevsky, Alex and Sutskever, Ilya and Salakhutdinov, Ruslan},
  journal={Journal of Machine Learning Research},
  volume={15},
  number={1},
  pages={1929--1958},
  year={2014}
}

@article{shazeer2017outrageously,
  title={Outrageously large neural networks: The sparsely-gated mixture-of-experts layer},
  author={Shazeer, Noam and Mirhoseini, Azalia and Maziarz, Krzysztof and Davis, Andy and Le, Quoc and Hinton, Geoffrey and Dean, Jeff},
  journal={ICLR},
  year={2017}
}

@article{fedus2022switch,
  title={Switch transformers: Scaling to trillion parameter models with simple and efficient sparsity},
  author={Fedus, William and Zoph, Barret and Shazeer, Noam},
  journal={Journal of Machine Learning Research},
  volume={23},
  pages={1--39},
  year={2022}
}

@article{aljundi2019taskfree,
  title={Task-free continual learning},
  author={Aljundi, Rahaf and Kelchtermans, Klaas and Tuytelaars, Tinne},
  journal={IEEE CVPR},
  year={2019}
}

@article{gama2014concept,
  title={A survey on concept drift adaptation},
  author={Gama, Jo{\~a}o and {\v{Z}}liobait{\.e}, Indr{\.e} and Bifet, Albert and Pechenizkiy, Mykola and Bouchachia, Abdelhamid},
  journal={ACM Computing Surveys},
  volume={46},
  number={4},
  pages={1--37},
  year={2014}
}

@article{parisi2019continual,
  title={Continual lifelong learning with neural networks: A review},
  author={Parisi, German I and Kemker, Ronald and Part, Jose L and Kanan, Christopher and Wermter, Stefan},
  journal={Neural Networks},
  volume={113},
  pages={54--71},
  year={2019}
}

@inproceedings{gururangan2020dontstop,
  title={Don't Stop Pretraining: Adapt Language Models to Domains and Tasks},
  author={Gururangan, Suchin and Marasovi{\'c}, Ana and Swayamdipta, Swabha and Lo, Kyle and Beltagy, Iz and Downey, Doug and Smith, Noah A.},
  booktitle={Proceedings of the 58th Annual Meeting of the Association for Computational Linguistics},
  pages={8342--8360},
  year={2020},
  publisher={Association for Computational Linguistics}
}

@article{yildiz2024investigating,
  title={Investigating Continual Pretraining in Large Language Models: Insights and Implications},
  author={Y{\i}ld{\i}z, {\c{C}}a{\u{g}}atay and Ravichandran, Nishaanth Kanna and Sharma, Nitin and Bethge, Matthias and Ermis, Beyza},
  journal={arXiv preprint arXiv:2402.17400},
  year={2024}
}

@book{quinonero2009dataset,
  title={Dataset Shift in Machine Learning},
  editor={Qui{\~n}onero-Candela, Joaquin and Sugiyama, Masashi and Schwaighofer, Anton and Lawrence, Neil D.},
  publisher={MIT Press},
  year={2009}
}

@article{morenotorres2012unifying,
  title={A unifying view on dataset shift in classification},
  author={Moreno-Torres, Jose G. and Raeder, Troy and Alaiz-Rodr{\'i}guez, Roc{\'i}o and Chawla, Nitesh V. and Herrera, Francisco},
  journal={Pattern Recognition},
  volume={45},
  number={1},
  pages={521--530},
  year={2012},
  publisher={Elsevier}
}

@article{van2019three,
  title={Three scenarios for continual learning},
  author={van de Ven, Gido M. and Tolias, Andreas S.},
  journal={arXiv preprint arXiv:1904.07734},
  year={2019}
}

@article{goodfellow2013empirical,
  title={An empirical investigation of catastrophic forgetting in gradient-based neural networks},
  author={Goodfellow, Ian J. and Mirza, Mehdi and Xiao, Da and Courville, Aaron and Bengio, Yoshua},
  journal={arXiv preprint arXiv:1312.6211},
  year={2013}
}

@article{poirazi2003pyramidal,
  title={Pyramidal neuron as two-layer neural network},
  author={Poirazi, Panayiota and Brannon, Terrence and Mel, Bartlett W.},
  journal={Neuron},
  volume={37},
  number={6},
  pages={989--999},
  year={2003},
  publisher={Elsevier}
}

@article{gilbert1983clustered,
  title={Clustered intrinsic connections in cat visual cortex},
  author={Gilbert, Charles D. and Wiesel, Torsten N.},
  journal={Journal of Neuroscience},
  volume={3},
  number={5},
  pages={1116--1133},
  year={1983}
}

@article{amari1977dynamics,
  title={Dynamics of pattern formation in lateral-inhibition type neural fields},
  author={Amari, Shun-ichi},
  journal={Biological Cybernetics},
  volume={27},
  number={2},
  pages={77--87},
  year={1977},
  publisher={Springer}
}

@article{wilson1972excitatory,
  title={Excitatory and inhibitory interactions in localized populations of model neurons},
  author={Wilson, Hugh R. and Cowan, Jack D.},
  journal={Biophysical Journal},
  volume={12},
  number={1},
  pages={1--24},
  year={1972},
  publisher={Elsevier}
}

@article{khona2022attractor,
  title={Attractor and integrator networks in the brain},
  author={Khona, Mikail and Fiete, Ila R.},
  journal={Nature Reviews Neuroscience},
  volume={23},
  number={12},
  pages={744--766},
  year={2022},
  publisher={Nature Publishing Group}
}

@article{frank2001interactions,
  title={Interactions between frontal cortex and basal ganglia in working memory: A computational model},
  author={Frank, Michael J. and Loughry, Bryan and O'Reilly, Randall C.},
  journal={Cognitive, Affective, \& Behavioral Neuroscience},
  volume={1},
  number={2},
  pages={137--160},
  year={2001},
  publisher={Springer}
}

@article{oreilly2006making,
  title={Making working memory work: A computational model of learning in the prefrontal cortex and basal ganglia},
  author={O'Reilly, Randall C. and Frank, Michael J.},
  journal={Neural Computation},
  volume={18},
  number={2},
  pages={283--328},
  year={2006},
  publisher={MIT Press}
}

@article{faisal2008noise,
  title={Noise in the nervous system},
  author={Faisal, A. Aldo and Selen, Luc P. J. and Wolpert, Daniel M.},
  journal={Nature Reviews Neuroscience},
  volume={9},
  number={4},
  pages={292--303},
  year={2008},
  publisher={Nature Publishing Group}
}

@article{mainen1995reliability,
  title={Reliability of spike timing in neocortical neurons},
  author={Mainen, Zachary F. and Sejnowski, Terrence J.},
  journal={Science},
  volume={268},
  number={5216},
  pages={1503--1506},
  year={1995},
  publisher={American Association for the Advancement of Science}
}

@article{natarajan1995sparse,
  title={Sparse approximate solutions to linear systems},
  author={Natarajan, Balas Kausik},
  journal={SIAM Journal on Computing},
  volume={24},
  number={2},
  pages={227--234},
  year={1995},
  publisher={SIAM}
}

\appendix

\section{Per-method Performance Matrices: Stored vs Recovered}
\label{app:perf_matrices}

This appendix collects the full $3{\times}3$ performance matrices for Experiment~1 across the four headline mask configurers, NoMask (the null mask reuses the same $128$-slot subnetwork for every task), KWTA-only, FTN-Fast, and FTN-Slow, under both evaluation protocols. Each panel shows two heatmaps side by side: the left heatmap (\emph{stored}) evaluates each finished training block using the mask that was active when that task was trained; the right heatmap (\emph{recovered}) evaluates with a mask that the configurer is asked to find from scratch on a fresh evaluation batch, with no task identity given. Cell $(i,j)$ is the average performance on task~$j$ after training through task~$i$, averaged over $8$ seeds. Classification cells report ACC (range $[0,1]$, higher is better); regression cells report MSE clamped at $0.5$ for visual comparability (lower is better; cells with mean MSE above $0.5$ render in the saturated top color and are flagged with the colorbar's $\rightarrow$ extension).

\paragraph{Reading the panels.}
The two heatmaps separate two failure modes that get summed together in the headline ACC/MSE:

\begin{itemize}
  \item A drop in the \emph{stored} (left) panel along an off-diagonal row $i$, column $j<i$ measures how much training task~$j$'s solution has been \emph{overwritten} by subsequent gradient updates on tasks~$j{+}1\ldots i$. With perfectly disjoint masks this drop is zero by construction (the FixedMask appendix entry, omitted here for space, has identical diagonal and off-diagonal entries; cf.\ Theorem~\ref{thm:isolation}). With the optimizer state reset described in Section~\ref{sec:setup}, FixedMask attains $\mathrm{FM}=0.000\pm0.000$ on Exp.\ 1 (Table~\ref{tab:exp1}) so any drop here is attributable to \emph{shared neurons between tasks at training time}. Methods with non-disjoint masks, including NoMask (full overlap by construction) and any adaptive method whose discovered masks happen to overlap, will show negative gradients on the stored column.
  \item A drop from the \emph{stored} panel to the corresponding cell in the \emph{recovered} panel measures \emph{mask recall error}: the underlying weights are unchanged, only the mask is being chosen by the configurer, so any difference is entirely attributable to picking the wrong subnetwork at evaluation time. NoMask and EWC have no recovery problem because they always use the same all-on-128 mask, but they pay this cost in the stored column. KWTA-only sits between the two: it writes moderately clean training masks but its one-step gradient proposal does not always converge to the same neuron set on a fresh batch, so the stored panel is noticeably better than the recovered panel; the regression matrices in particular show a substantial drop from stored to recovered.
\end{itemize}

The reader should therefore compare the two panels jointly. A method that loses 5\% on the stored column has an irrecoverable problem (gradient conflict between tasks); a method that loses 5\% from stored to recovered has a soluble problem (the trained subnetwork still exists, only the lookup is noisy). The aggregate decomposition was already given in Table~\ref{tab:overlap_recall} in the main text; the matrices below show where each loss is concentrated within the task sequence.

\paragraph{Per-method matrices, classification.}

\begin{figure}[h]
\centering
\begin{subfigure}[b]{0.48\textwidth}
  \includegraphics[width=\textwidth]{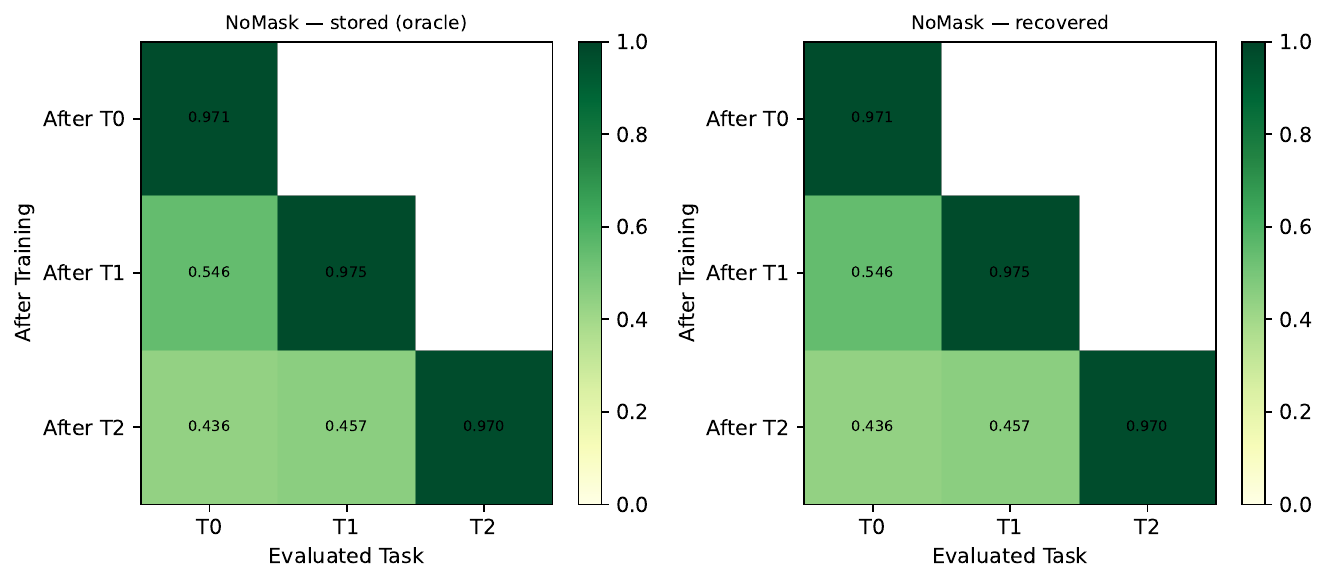}
  \caption{NoMask (clf): the same fixed first-128 mask is used for every task, so the stored panel shows full overwriting (rows~1--2 zero out task~0) and the recovered panel is identical because no recovery happens.}
\end{subfigure}
\hfill
\begin{subfigure}[b]{0.48\textwidth}
  \includegraphics[width=\textwidth]{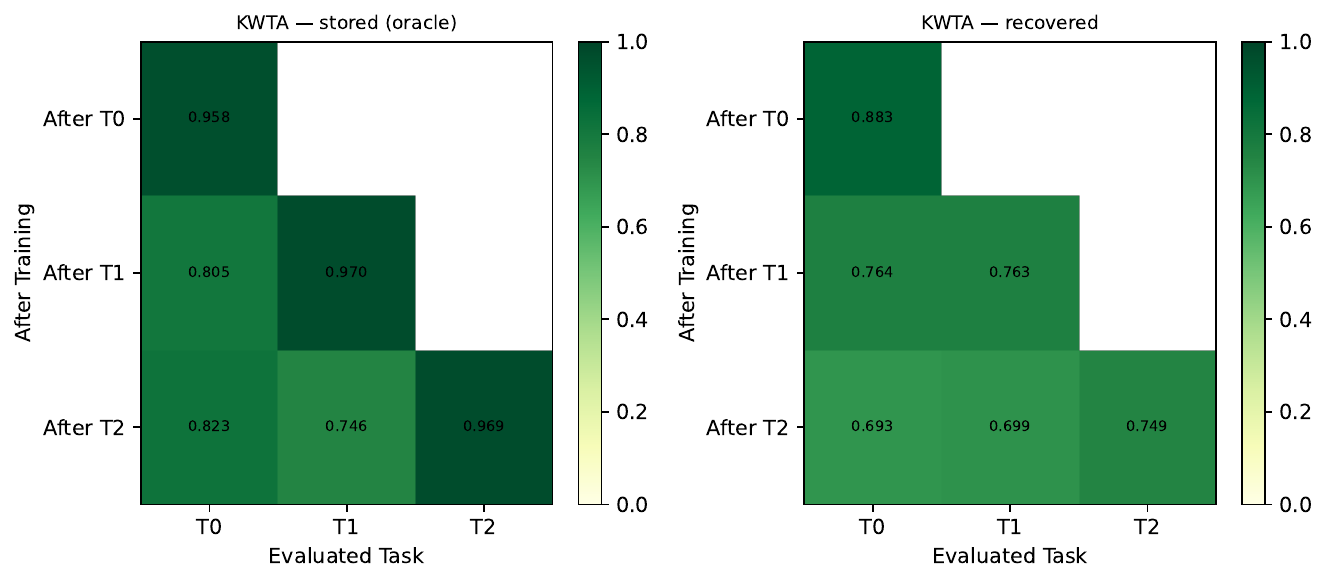}
  \caption{KWTA-only (clf): the stored panel is close to the FixedMask oracle, indicating the trained masks were nearly disjoint, but the recovered panel still loses about $0.13$~ACC on prior tasks because a one-step gradient proposal with no spatial averaging picks an inconsistent mask at evaluation time.}
\end{subfigure}\\[1ex]
\begin{subfigure}[b]{0.48\textwidth}
  \includegraphics[width=\textwidth]{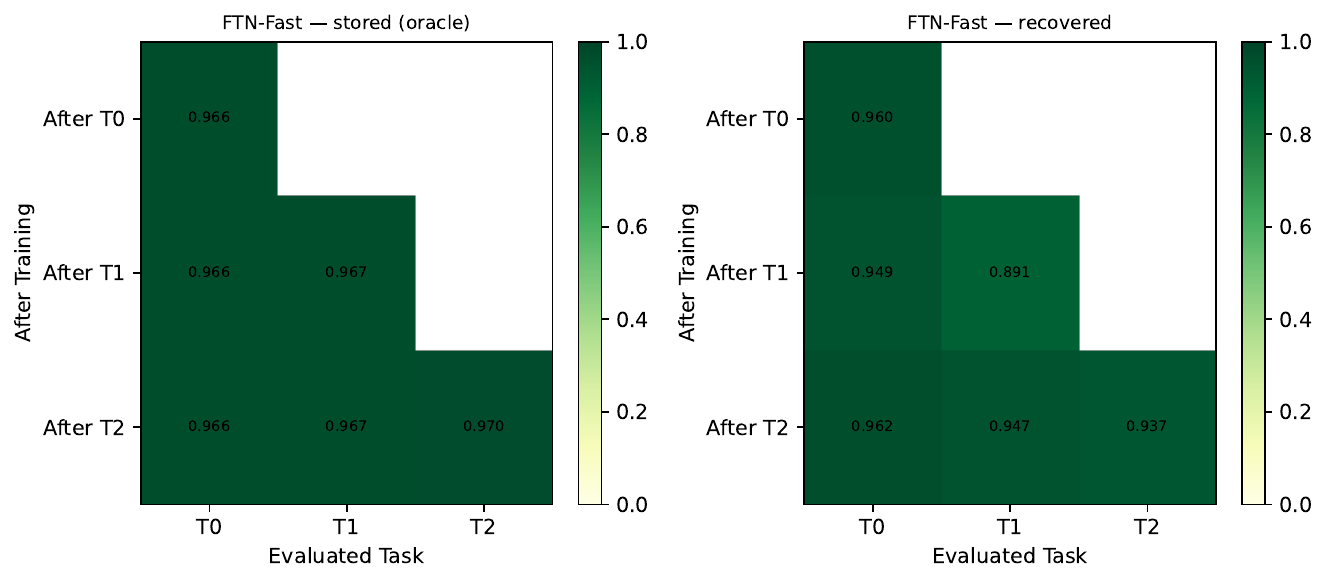}
  \caption{FTN-Fast (clf): stored and recovered panels are nearly identical. The large $17\times17$ kernel with two iterations is enough to make recovery deterministic at the level needed for accuracy.}
\end{subfigure}
\hfill
\begin{subfigure}[b]{0.48\textwidth}
  \includegraphics[width=\textwidth]{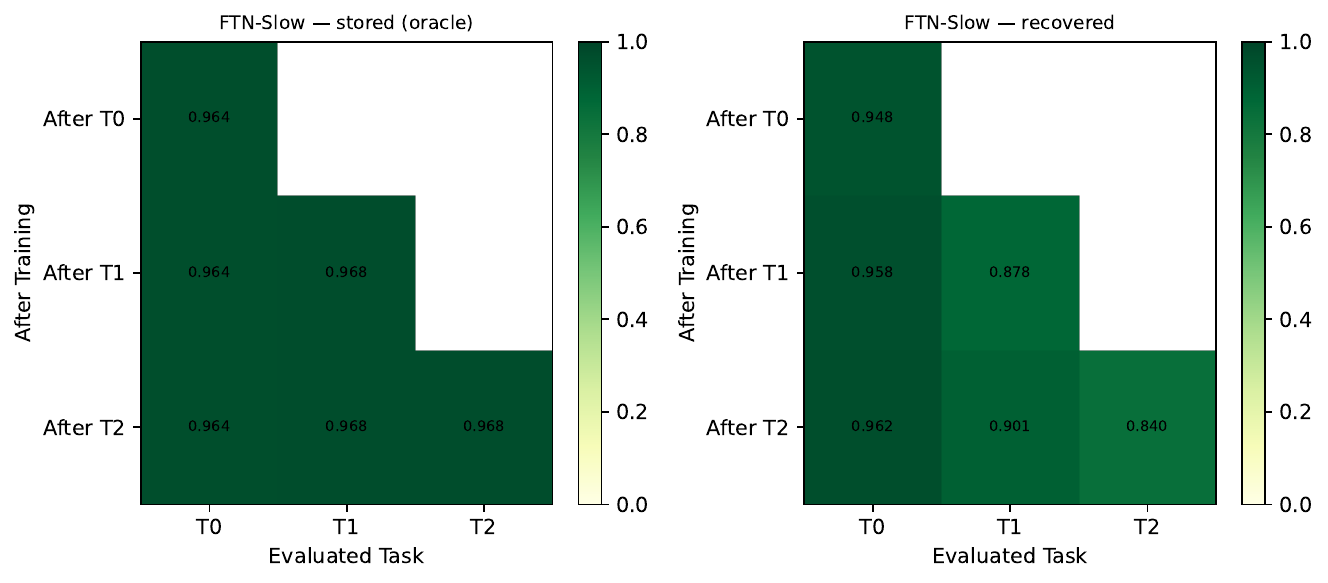}
  \caption{FTN-Slow (clf): the smallest training-time overlap of any adaptive method (cleanest stored panel) but slightly larger recall noise than FTN-Fast, a small kernel applied for many steps produces the tightest spatial separation, at the cost of slightly higher recovery variance per call.}
\end{subfigure}
\caption{Stored vs.\ recovered $3{\times}3$ performance matrices on synthetic classification (mean ACC over 8 seeds; range $[0,1]$). Cell $(i,j)$ is performance on task $j$ after training through task $i$.}
\label{fig:app_perf_clf}
\end{figure}

\paragraph{Per-method matrices, regression.}

\begin{figure}[h]
\centering
\begin{subfigure}[b]{0.48\textwidth}
  \includegraphics[width=\textwidth]{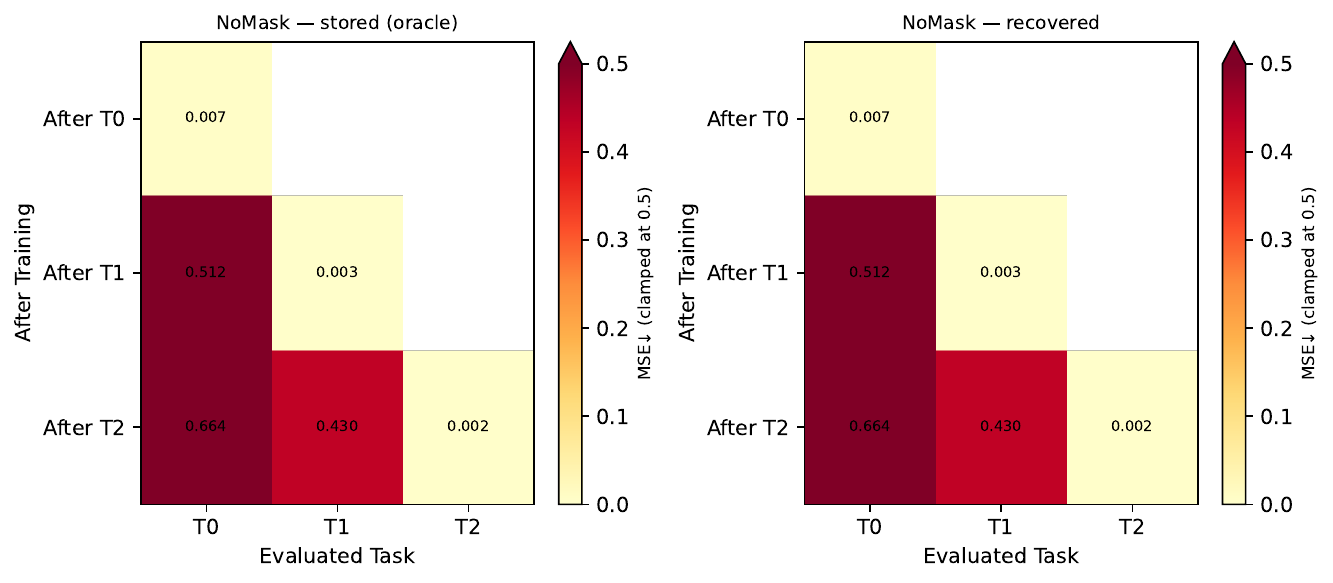}
  \caption{NoMask (reg): off-diagonal MSE saturates at the colorbar cap, mirroring the classification picture, the single shared subnetwork cannot retain incompatible regression targets.}
\end{subfigure}
\hfill
\begin{subfigure}[b]{0.48\textwidth}
  \includegraphics[width=\textwidth]{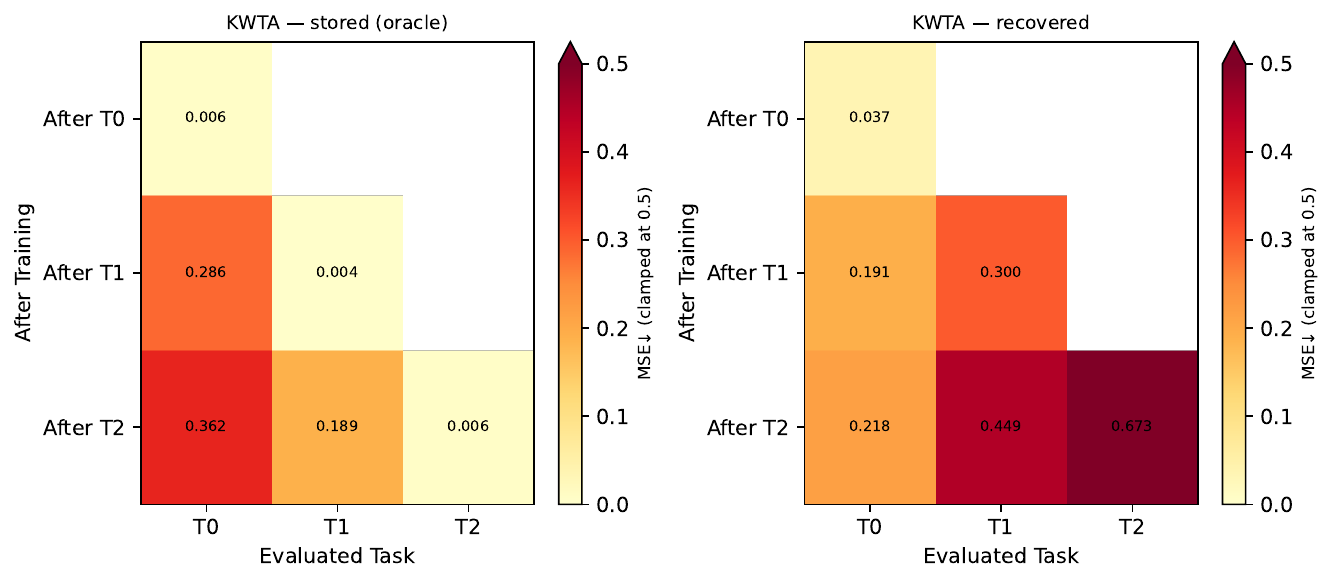}
  \caption{KWTA-only (reg): with $S{=}10$ steps at $\eta_m{=}0.2$, the regression configurer has more time per call than the classification one ($S{=}1$ at $\eta_m{=}1.0$), but the absence of lateral smoothing still produces a substantial stored-to-recovered drop on prior tasks ($+0.27$ MSE excess); both training-time overlap and recall error contribute roughly equally to the residual MSE.}
\end{subfigure}\\[1ex]
\begin{subfigure}[b]{0.48\textwidth}
  \includegraphics[width=\textwidth]{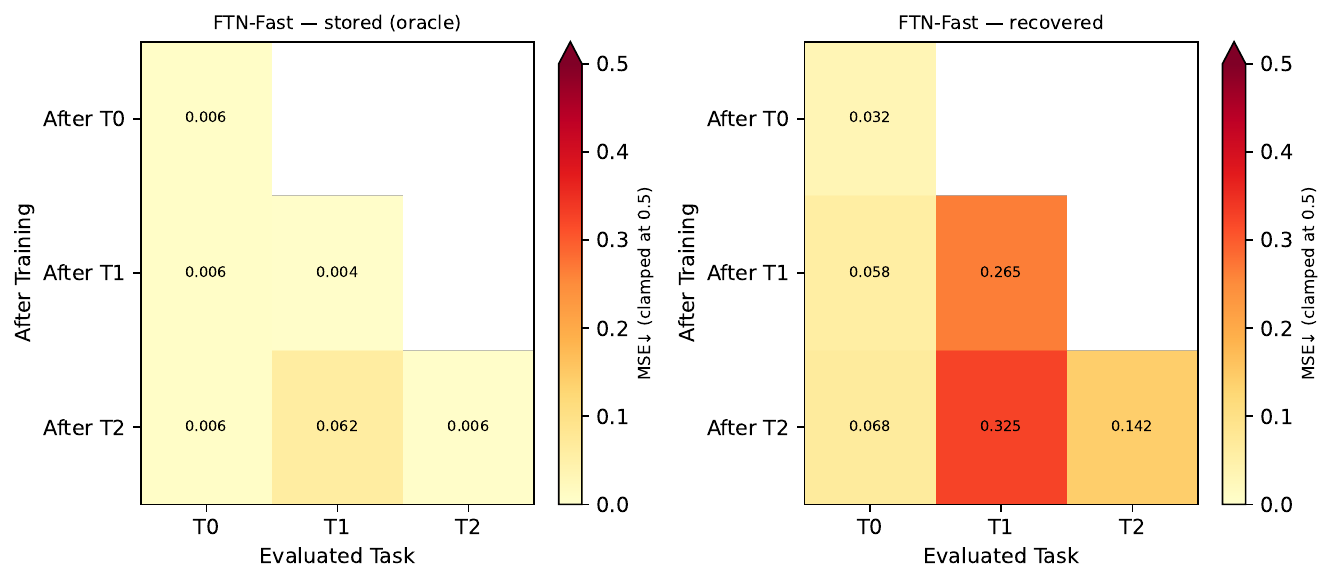}
  \caption{FTN-Fast (reg): stored and recovered matrices both stay close to the FixedMask oracle (max prior-task MSE around $0.13$).}
\end{subfigure}
\hfill
\begin{subfigure}[b]{0.48\textwidth}
  \includegraphics[width=\textwidth]{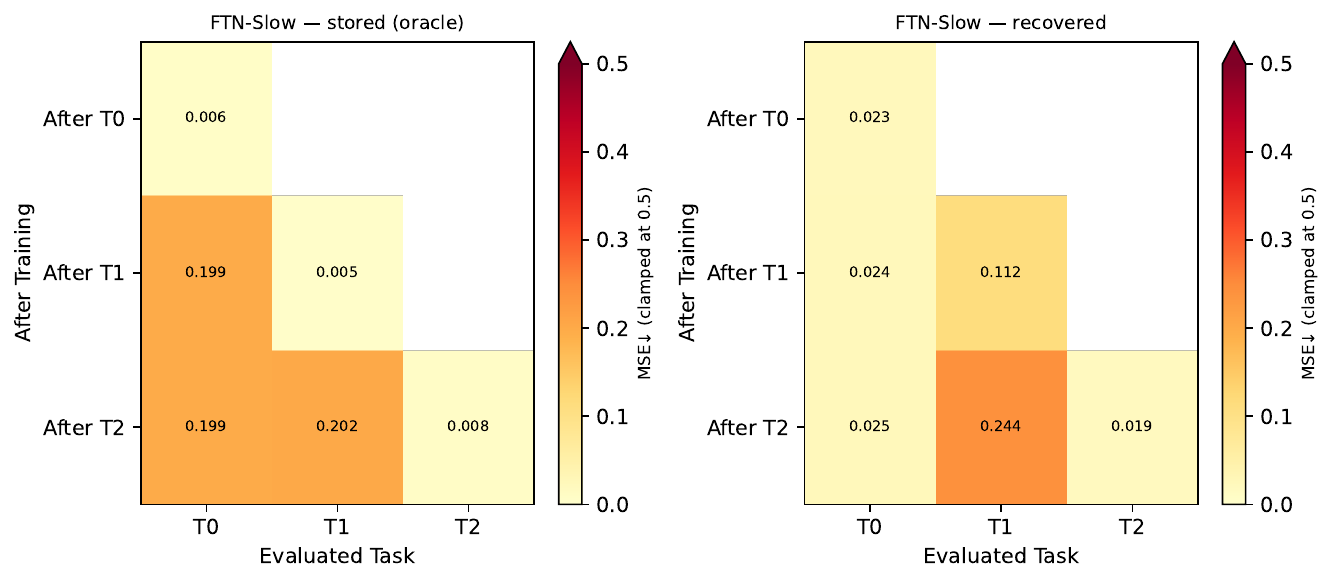}
  \caption{FTN-Slow (reg): smallest off-diagonal MSE among adaptive methods on both panels; the fine-grained spatial diffusion produces the tightest neuron-set separation across tasks, and the 10-step regression configurer recovers it accurately.}
\end{subfigure}
\caption{Stored vs.\ recovered $3{\times}3$ performance matrices on synthetic regression (mean MSE over 8 seeds; clamped at $0.5$, lower is better, $\rightarrow$ on the colorbar denotes saturation).}
\label{fig:app_perf_reg}
\end{figure}

\paragraph{Why classification and regression look different.}
The two configurers differ only in $S$ (1 vs.\ 10) and $\eta_m$ (1.0 vs.\ 0.2); architecture, training data, optimizer, capacity, and the zero mask cold-start are identical (cf.\ Section~\ref{sec:exp1}). On classification, a single high-LR step gives a usable but batch-noisy proposal whose support shifts between calls. FTN's lateral kernel averages that noise into a stable spatial blob, so stored and recovered masks coincide; KWTA-only has no smoothing to stabilize the proposal across calls and therefore picks an inconsistent mask at evaluation time, producing the $0.131$ ACC recall gap visible in Figure~\ref{fig:app_perf_clf} (top right). On regression, ten gentler steps converge to a more reproducible proposal, but the higher overall MSE scale of regression amplifies any neuron-set inconsistency: KWTA-only's recall error here ($+0.267$ MSE excess) is in fact the largest single contributor to its regression loss, and FTN's lateral smoothing is what brings recall error down by an order of magnitude.

\end{document}